
\documentclass{article}

\usepackage{microtype}
\usepackage{graphicx}
\usepackage{subfigure}
\usepackage{booktabs} 
\usepackage{hhline}
\usepackage{hyperref}

\usepackage{tikz}
\usepackage{bm}
\usepackage{amsmath}
\usepackage{amsthm}
\usepackage{amsfonts}
\usepackage{amssymb}
\definecolor{teal}{rgb}{0,0.5,0.5}

\usepackage{url}
\usepackage{xspace}

\newenvironment{tightcenter}{%
  \setlength\topsep{0pt}
  \setlength\parskip{0pt}
  \begin{center}
}{%
  \end{center}
}

\usepackage[export]{adjustbox}
\usepackage[inline]{enumitem}

\newif\ifcomments
\commentstrue

\ifcomments

\newcommand{\cut}[1]{}
\newcommand{\maybecut}[1]{\textcolor{orange}{#1}}
\newcommand{\debating}[1]{\textcolor{red}{#1}}
\newcommand{\added}[1]{\textcolor{red}{#1}}
\newcommand{\donepromised}[1]{{}}
\newcommand{\toresolve}[1]{\textcolor{red}{#1}}
\newcommand{\remove}[1]{\textcolor{green}{#1}}
\newcommand{\removehide}[1]{}
\newcommand{\alt}[1]{\textcolor{brown}{#1}}
\newcommand{\commentsm}[1]{\textcolor{red}{({\bf SM:} #1)}}
\newcommand{\commentrti}[1]{{\color{teal}({\bf RTI:} #1)}}
\newcommand{\commentpv}[1]{\textcolor{blue}{({\bf PV:} #1)}}
\newcommand{\commental}[1]{\textcolor{magenta}{({\bf AL:} #1)}}

\newcommand{\old}[1]{\textcolor{red}{\sout{#1}}}
\newcommand{\new}[1]{\textcolor{BrickRed}{\uline{#1}}}
\else
    \newcommand{\commentsm}[1]{}
    \newcommand{\commentrti}[1]{}
    \newcommand{\commentpv}[1]{}
    \newcommand{\commental}[1]{}
    \newcommand{\maybecut}[1]{}
    \newcommand{\debating}[1]{}
    \newcommand{\added}[1]{\textcolor{red}{#1}}
    \newcommand{\donepromised}[1]{}
    \newcommand{\toresolve}[1]{}
    \newcommand{\remove}[1]{}
    \newcommand{\removehide}[1]{}
    \newcommand{\alt}[1]{}
    
    \newcommand{\old}[1]{}
    \newcommand{\new}[1]{}

\fi


\usepackage{mathtools}
\usepackage{amssymb}
\usepackage{amsmath}
\usepackage{xspace}

\newcommand{\expect}{\mathbb{E}}
\newcommand{\prob}{\mathbb{P}}
\newcommand{\reals}{\mathbb{R}}
\newcommand{\tuple}[1]{\langle{#1}\rangle}



\newcommand{\ltlalways}{\ensuremath{\Box}\xspace} 
\newcommand{\ltleventually}{\ensuremath{\Diamond}\xspace}
\newcommand{\ltlnext}{\ensuremath{\bigcirc}\xspace}

\newcommand{\true}{\ensuremath{\mathsf{true}}\xspace{}}
\newcommand{\false}{\ensuremath{\mathsf{false}}\xspace{}}
\newcommand{\ltluntil}{\ensuremath{\operatorname{\mathsf{U}}}\xspace{}}

\newcommand{\textltluntil}{\ensuremath{\operatorname{\mathsf{U}}}\xspace}

\newcommand{\mprog}{\operatorname{prog}}

\newcommand{\gnnpp}{$\text{GNN}_{\operatorname{prog}}^{\operatorname{pre}}$}
\newcommand{\gnnp}{$\text{GNN}_{\operatorname{prog}}$}

\newcommand{\grupp}{$\text{GRU}_{\operatorname{prog}}^{\operatorname{pre}}$}
\newcommand{\grup}{$\text{GRU}_{\operatorname{prog}}$}

\newcommand{\lstmp}{$\text{LSTM}_{\operatorname{prog}}$}

\newcommand{\simLtl}{\texttt{LTLBootcamp}\xspace}
\newcommand{\letterWorld}{\texttt{LetterWorld}\xspace}
\newcommand{\zoneEnv}{\texttt{ZoneEnv}\xspace}
\usepackage{multirow}

\newtheorem{theorem}{Theorem}[section]

\theoremstyle{definition}
\newtheorem{definition}{Definition}[section]

\usepackage{array}
\usepackage{booktabs}
\usepackage{rotating}

\newenvironment{itpars}
  {\par\itshape}
  {\par}




\usepackage[accepted]{icml2021}

\icmltitlerunning{LTL2Action: Generalizing LTL Instructions for Multi-Task RL}

\begin{document}

\twocolumn[
\icmltitle{LTL2Action: Generalizing LTL Instructions for Multi-Task RL}



\icmlsetsymbol{equal}{*}

\begin{icmlauthorlist}
\icmlauthor{Pashootan Vaezipoor}{equal,to,vector}
\icmlauthor{Andrew C. Li}{equal,to,vector}
\icmlauthor{Rodrigo Toro Icarte}{to,vector}
\icmlauthor{Sheila McIlraith}{to,vector,sri}
\end{icmlauthorlist}

\icmlaffiliation{to}{Department of Computer Science, University of Toronto}
\icmlaffiliation{vector}{Vector Institute for Artificial Intelligence}
\icmlaffiliation{sri}{Schwartz Reisman Institute for Technology and Society}

\icmlcorrespondingauthor{Pashootan Vaezipoor}{pashootan@cs.toronto.edu}
\icmlcorrespondingauthor{Andrew Li}{andrewli@cs.toronto.edu}

\icmlkeywords{Machine Learning, ICML}

\vskip 0.3in
]



\printAffiliationsAndNotice{\icmlEqualContribution} 

\begin{abstract}

We address the problem of teaching a deep reinforcement learning (RL) agent to follow instructions in multi-task environments. Instructions are expressed in a well-known formal language -- \emph{linear temporal logic} (LTL) -- and can specify a diversity of complex, temporally extended behaviours, including conditionals and alternative realizations.
Our proposed learning approach exploits the compositional syntax and the semantics of LTL, enabling our RL agent to learn task-conditioned policies that generalize to new instructions, not observed during training. 
To reduce the overhead of learning LTL semantics, we introduce an environment-agnostic LTL pretraining scheme which improves sample-efficiency in downstream environments. 
Experiments on discrete and continuous domains target combinatorial task sets of up to $\sim10^{39}$ unique tasks and demonstrate the strength of our approach in learning to solve (unseen) tasks, given LTL instructions. 

\removehide
{
We introduce a novel \toresolve{neurosymbolic} approach for teaching a deep reinforcement learning (RL) agent a wide array of temporally-extended, taskable behaviors. Complex tasks are formally specified to the agent through linear temporal logic (LTL), while a well-defined method called LTL-progression is used to efficiently learn non-Markovian, task-conditioned policies. While decomposing complex tasks into independent subtasks is a ubiquitous technique in the multitask RL literature, we demonstrate that this can often lead to sub-optimal behaviors. Instead, we propose a holistic approach that directly encodes the full task using neural networks and explore the impact of different neural architectures on generalization to unseen tasks. To circumvent the overhead of learning LTL semantics, we introduce an environment-agnostic LTL pretraining scheme which is able to improve sample-efficiency in downstream environments. Experiments on discrete and continuous RL domains demonstrate the merits of our approach in learning to solve complex unseen LTL tasks.
}


\end{abstract}
\section{Introduction}

%
%
%


A long-standing aspiration of artificial intelligence is to build agents that can understand and follow human instructions to solve problems \cite{mccarthy1960programs}. Recent advances in deep 
and reinforcement learning (RL) have made it possible to learn a policy that decides the next action conditioned on the current observation and a natural language instruction. Given enough training data, the learned policy will show some degree of generalization to unseen instructions 
\citep[e.g.,][]{DBLP:journals/corr/HermannHGWFSSCJ17,DBLP:conf/icml/OhSLK17,DBLP:conf/aaai/ChaplotSPRS18,DBLP:conf/iclr/YuZX18,DBLP:conf/iclr/Co-ReyesGSAADAL19,DBLP:conf/nips/JiangGMF19,DBLP:conf/ijcai/LuketinaNFFAGWR19}. Unfortunately, such approaches do not scale well because they require (for every possible environment) manually building a large training set comprised of natural language instructions with their corresponding reward functions.


Motivated by this observation, recent works have explored using structured or formal languages (instead of natural language) to instruct RL agents.
Such languages offer several desirable properties for RL, including unambiguous semantics, and compact compositional syntax that enables RL practitioners to (automatically) generate massive training data to teach RL agents to follow instructions. Examples of such languages include  policy sketches \cite{andreas2017modular}, task graphs \cite{DBLP:conf/nips/SohnOL18}, procedural programs \cite{DBLP:conf/iclr/SunWL20}, declarative programs \cite{DBLP:journals/corr/DenilCCSF17}, reward machines \cite{DBLP:journals/corr/abs-2010-03950}, and temporal logic  \cite{leon2020systematic}. 
Many of these methods exploit compositional syntax to decompose instructions into smaller subtasks that are solved independently, without consideration for the subtasks that follow. This can lead to subtask policies that are individually optimal, but when combined are suboptimal with respect to the instructions as a whole.
We refer to these as \emph{myopic approaches}.

%

\removehide{For instance, two tasks 
``\textit{get-coffee and then get-mail}''`and ``\textit{get-coffee and then meet-boss}'' will reuse the same policy to achieve \textit{get-coffee}. Unfortunately, doing so might lead to suboptimal behaviour because the optimal way to achieve \textit{get-coffee} depends on what the agent has to do next (e.g., the optimal coffee machine to realize \textit{get-coffee} depends on where the agent has to go afterwards). We call these approaches \emph{myopic} as they 
optimize the completion of subtasks without considering the rest of the task.
}

In this work, we use \emph{linear temporal logic (LTL)} \cite{DBLP:conf/focs/Pnueli77} over a domain-specific 
vocabulary (e.g., \textit{have-coffee}) to instruct RL agents to learn 
policies that generalize well to unseen instructions without compromising optimality guarantees -- in contrast to typical myopic methods.
LTL is an expressive formal language that combines temporal modalities such as \emph{eventually}, \emph{until}, and \emph{always} with binary predicates that establish the truth or falsity of an event or property (e.g., \textit{have-coffee}), composed via logical connectives to support 
specification of goal sequences, partial-order tasks, safety constraints, and much more.
Our learning algorithm exploits a semantics-preserving rewriting operation, called \emph{LTL progression}, that allows the agent to identify aspects of the original instructions that remain to be addressed in the context of an evolving experience.
This enables learning policies in a non-myopic manner, all the while preserving optimality guarantees and supporting generalization.
%

%

Our approach is realized in a deep RL setting, exploiting event detectors to recognize domain vocabulary. \cut{The compositional syntax of LTL allows us to randomly sample millions of LTL instructions, systematically transforming them into reward functions that reward completion of the instructions.} We encode LTL instructions 
using an LSTM, GRU, or a Graph Neural Network (GNN). To reduce the overhead of learning LTL semantics, we introduce an environment-agnostic LTL pretraining scheme. 
\cut{Our LTL-progression enhanced RL learns a policy given the current observation and LTL instruction.} We evaluate our approach on 
discrete and continuous domains. \removehide{ assessing the effectiveness of our LTL-progression based approach relative to a myopic approach; the relative merit of 
architectures that encode LTL using LSTM, GRU, or GNN; and the \debating{performance??} benefit of environment-agnostic pretraining to learn LTL semantics.}  
%
Our contributions are as follows:

\begin{itemize}[topsep=0pt,itemsep=2pt,partopsep=0pt, parsep=0pt,leftmargin=*]
    
    \item We propose a novel approach for teaching RL agents to follow LTL instructions that has theoretical advantages over existing RL methods employing LTL instructions \cite{kuo2020encoding,leon2020systematic}. This leads to better generalization performance in our experiments. 
    
    
    \item We show that encoding LTL instructions via a neural architecture equipped with LTL progression yielded higher reward policies relative to a myopic approach. Out of the neural architectures GNN offered better generalization compared to LSTM and GRU.
   
    \item Lastly, we demonstrate that applying an environment-agnostic LTL pretraining scheme improves sample efficiency on downstream tasks.
    \removehide{
    \item exploits the compositional structure of the language to learn subtasks, while \remove{preserving consideration of the instructions in their entirety, and thus} preserving optimality guarantees lost by myopic approaches; 
    \item \alt{benefits from the efficiencies afforded by subtask decomposition while learning a memoryless policy and preserving optimality guarantees lost by myopic approaches.}
    \item yields optimal stationary policies with the same expected discounted return as the optimal policy to realize the non-Markovian LTL instructions, conditioned on the state-action history;
    \item characterizes policy learning with temporally extended instructions as an MDP in a transformed space thereby yielding theoretical advantage relative to RNN based methods that are characterized as POMDPs  \cite{kuo2020encoding,leon2020systematic}.  
    \item\commentsm{other comments we might make somewhere - works w/ any RL alg, mention DEEP Rl again?}
    }
    \end{itemize}

\removehide{
\noindent Experimental results show that:
\begin{itemize}[topsep=0pt,itemsep=2pt,partopsep=0pt, parsep=0pt,leftmargin=*]
\item our approach has superior performance over competing methods which are myopic or do not use LTL progression.
\commental{I don't like non-stationary.} \alt{competing approaches or approaches that don't use LTL progression}
\item environment-agnostic LTL pretraining improves sample efficiency. 
\item graph neural networks offer better generalization than sequence models for encoding LTL instructions. 
\end{itemize}
}

\removehide{
In more details, our approach requires a predefined set of \textit{event detectors}. These detectors indicate when different subtasks are completed in the environment. Then, we use LTL to compose occurrences of those detectors over time to define complex tasks. Given the detectors, we can randomly sample millions of LTL tasks for the agent to learn from. We then show how to transforms LTL tasks into their corresponding reward functions by rewarding the agent when the task is completed. Finally, we use deep RL to learn a policy that considers the current observation and LTL instructions when selecting actions. Crucially, we progress the LTL instruction as the agent interacts with the environment. Results on discrete and continuous domains show that progressing the LTL task leads to better generalization performance than following a myopic approach. In addition, we show that the LTL tasks can be encoded using an LSTM, GRU, or a graph neural network (GNN), and that all those encoders work well in practice. That said, our best results were obtained by encoding the LTL task using a GNN. Finally, we show that the LTL encoder can be pretrained in an environment-agnostic fashion and that doing so results on better sample
efficiency.
%
With that, the main contributions of our paper can be summarized as follows:
\begin{itemize}
    \setlength\itemsep{0em}
    \item A novel approach for teaching RL agents to follow LTL instructions. Our approach has theoretical advantages over existing methods to instruct RL agents using LTL  \cite{kuo2020encoding,leon2020systematic}. This derives on a better generalization performance in our experiments. 
    \item Empirical results comparing different methods to encode LTL instructions (LSTM, GRU, GNN).
    \item To show that the LTL encoder can be pretrained in an environment-agnostic manner and the empirical advantages of doing so.
\end{itemize}
}

\section{Reinforcement Learning}

RL agents learn optimal behaviours by interacting with an environment. Usually, the environment is modelled as a \emph{Markov Decision Process (MDP)}. An MDP is a tuple $\mathcal{M} = \tuple{S,T,A,\prob,R,\gamma,\mu}$, where $S$ is a finite set of \emph{states}, $T \subseteq S$ is a finite set of \emph{terminal states}, $A$ is a finite set of \emph{actions}, $\prob(s'|s,a)$ is the \emph{transition probability distribution}, $R:S \times A\times S \rightarrow \reals$ is the \emph{reward function}, $\gamma$ is the \emph{discount factor}, and $\mu$ is the \emph{initial state distribution}. 

The interactions with the environment are divided into \emph{episodes}. At the beginning of an episode, the environment is set at some initial state $s_0\in S$ sampled from $\mu$. Then, at time step $t$, the agent observes the current state $s_t \in S$ and executes an action $a_t \in A$ according to some \emph{policy} $\pi(a_t|s_t)$ -- which is a probability distribution from states to actions. In response, the environment returns the next state $s_{t+1}$ sampled from $\prob(s_{t+1}|s_t,a_t)$ and an immediate reward $r_t = R(s_t,a_t,s_{t+1})$. This process then repeats until reaching a terminal state (starting a new episode).
The agent's objective is to learn an \emph{optimal policy} $\pi^*(a|s)$ that maximizes the expected discounted return $\expect_{\pi}{\left[\sum_{k=0}^{\infty}\gamma^kr_{t+k} \middle| S_t=s\right]}$ when starting from any state $s \in S$ and time step $t$.


\section{Multitask Learning with LTL}
\label{sec:multitask-ltl}


In order to instruct RL agents using language, the first step is to agree upon a common 
vocabulary between us and the agent. In this work, we use a finite set of propositional symbols $\mathcal{P}$ as the vocabulary, representing high-level events or properties (henceforth ``events'') whose occurrences in the environment can be detected by the agent. For instance, in a smart home environment, $\mathcal{P}$ could include events such as opening the living room window, activating the fan, turning on/off the stove, or entering the living room. Then, we use LTL to compose temporally-extended occurrences of these events into instructions. For example, two possible instructions that can be expressed in LTL (but described here in plain English) are (1) ``Open the living room window and activate the fan in any order, then turn on the stove" and (2) ``Open the living room window but don't enter the living room until the stove is turned off". 

In this section, we discuss how to specify instructions using LTL and automatically transform those instructions into reward functions, we formally define the RL problem of learning a policy that generalizes to unseen LTL instructions.

%

\subsection{Linear Temporal Logic (LTL)}

LTL extends propositional logic with two temporal operators: $\ltlnext$ (\emph{next}) and $\textltluntil$ (\emph{until}). Given a finite set of propositional symbols $\mathcal{P}$, the syntax of an LTL formula is defined as follows \cite{DBLP:books/daglib/0020348}:
\begin{equation*}
    \varphi \Coloneqq p \;|\; \neg \varphi \;|\; \varphi \wedge \psi \;|\; \ltlnext \varphi \;|\; \varphi \ltluntil \psi \text{ ~~~~where $p \in \mathcal{P}$}
\end{equation*}
In contrast to propositional logic, LTL formulas are evaluated over sequences of observations (i.e., \emph{truth assignments} to the propositional symbols in $\mathcal{P}$). Intuitively, the formula $\ltlnext \varphi$ (\textit{next} $\varphi$) holds if $\varphi$ holds at the next time step and $\varphi \ltluntil \psi$ ($\varphi$ \textit{until} $\psi$) holds if $\varphi$ holds until $\psi$ holds. 

Formally, the truth value of an LTL formula is determined relative to an infinite sequence of truth assignments $\sigma=\langle \sigma_0, \sigma_1, \sigma_2, \ldots \rangle$ for $\mathcal{P}$, where $p \in \sigma_i$ iff proposition $p \in \mathcal{P}$ holds at time step $i$. Then, $\sigma$ \emph{satisfies} $\varphi$ at time $i \geq 0$, denoted by $\tuple{\sigma,i}\models\varphi$, as follows:
\begin{itemize}[topsep=0pt,itemsep=0pt,partopsep=0pt, parsep=0pt]
    \item $\tuple{\sigma,i}\models p$ iff $p\in \sigma_i$, where $p \in \mathcal{P}$
    \item $\tuple{\sigma,i}\models \neg \varphi$ iff $\tuple{\sigma,i}\not\models\varphi$
    \item $\tuple{\sigma,i}\models (\varphi\wedge\psi)$ iff $\tuple{\sigma,i}\models\varphi$ and $\tuple{\sigma,i}\models\psi$
    \item $\tuple{\sigma,i}\models\ltlnext\varphi$ iff $\tuple{\sigma,i+1}\models\varphi$
    \item $\tuple{\sigma,i}\models\varphi\ltluntil\psi$ iff there exists $j$ such that $i\le j$ 
    and\\\verb!     !$\tuple{\sigma,j}\models\psi$, and $\tuple{\sigma,k}\models\varphi$ for all $k \in [i,j)$
\end{itemize}
A sequence $\sigma$ is then said to \emph{satisfy} $\varphi$ iff $\tuple{\sigma,0}\models\varphi$.


Any LTL formula can be define in terms of $p \in \mathcal{P}$, $\wedge$ (\emph{and}), $\neg$ (\emph{negation}), $\ltlnext$ (\emph{next}), and $\textltluntil$ (\emph{until}). However, from these operators, we can also define the Boolean operators $\vee$ (\emph{or}) and $\rightarrow$ (\emph{implication}), and the temporal operators $\ltlalways$ (\emph{always}) and $\ltleventually$ (\emph{eventually}), where $\tuple{\sigma,0}\models\ltlalways \varphi$ if $\varphi$ always holds in $\sigma$, and $\tuple{\sigma,0}\models\ltleventually \varphi$ if $\varphi$ holds at some point in $\varphi$.

As an illustrative example, consider the MiniGrid \cite{gym_minigrid} environment in Figure~\ref{fig:toy}. There are two rooms -- one with a blue and a red square, and one with a blue and a green square. The agent, represented by a red triangle, can rotate left and right, and move forward. Let's say that the set of propositions $\mathcal{P}$ includes $\mathrm{R}$, $\mathrm{G}$, and $\mathrm{B}$, which are $\true$ if and only if the agent is standing on a red/green/blue square (respectively) in the current time step. Then, we can define a wide variety of tasks using LTL:
\begin{itemize}[topsep=0pt,itemsep=1pt,partopsep=0pt, parsep=0pt]
    \item Single goal: $\ltleventually \mathrm{R}$ (reach a red square).
    \item Goal sequences: $\ltleventually (\mathrm{R} \wedge \ltleventually\mathrm{G})$ (reach red and then green).
    \item Disjunctive goals: $\ltleventually \mathrm{R} \vee \ltleventually\mathrm{G}$ (reach red or green).
    \item Conjunctive goals: $\ltleventually \mathrm{R} \wedge \ltleventually\mathrm{G}$ (reach red and green\footnote{In any order.}).
    \item Safety constraints: $\ltlalways \neg \mathrm{B}$ (do not touch a blue square).
\end{itemize}
%
%
We can also combine these tasks to define new tasks, e.g., ``\textit{go to a red square and then a green square but do not touch a blue square}" can be expressed as $\ltleventually (\mathrm{R} \wedge \ltleventually\mathrm{G}) \wedge \ltlalways \neg \mathrm{B}$.


While LTL is interpreted over infinite sequences, the truth of many LTL formulas can be ensured after a finite number of steps. For instance, the formula $\ltleventually \mathrm{R}$ (\textit{eventually red}) is satisfied by any infinite sequence where $\mathrm{R}$ is true at some point. Hence, as soon as $\mathrm{R}$ holds in a finite sequence, we know that $\ltleventually \mathrm{R}$ will hold. 
Similarly, a formula such as $\ltlalways \neg \mathrm{B}$ is immediately determined to be unsatisfied by an occurrence of $\mathrm{B}$, regardless of what follows.
\subsection{From LTL Instructions to Rewards}
\label{sec:ltl_tasks}

So far, our discussion about LTL instructions has been environment-agnostic (the syntax and semantics of LTL are independent of the environment). Now, we show how to reward an RL agent for realizing LTL instructions via an MDP. Following previous works \cite{DBLP:conf/atal/IcarteKVM18, alur2019}, we accomplish this by using a \emph{labelling function} $L: S \times A \rightarrow 2^\mathcal{P}$. The labelling function $L(s,a)$ assigns truth values to the propositions in $\mathcal{P}$ given the current state $s \in S$ of the environment and the action $a \in A$ selected by the agent. 
One may think of the labelling function as having a collection of \emph{event detectors} that fire when the propositions in $\mathcal{P}$ hold in the environment.
In our running example, $\mathrm{R} \in L(s,a)$ iff the agent is on top of the red square and similarly for G (green) and B (blue).

Given a labelling function, the agent can automatically evaluate whether an LTL instruction has been satisfied or falsified. If the instruction is satisfied (i.e., completed) we give the agent a reward of 1 and if the instruction is falsified (e.g., the agent breaks a safety constraint) we penalize the agent with a reward of $-1$. The episode ends as soon as the instruction is satisfied or falsified. Formally, given an LTL instruction $\varphi$ over $\mathcal{P}$ and a labelling function $L: S \times A \rightarrow 2^\mathcal{P}$ and the sequence of states and actions seen so far in the episode: $s_1, a_1, ..., s_t, a_t$, the reward function is defined as follows:
\begin{equation}
 R_\varphi(s_1, a_1, ..., s_t, a_t) = 
\begin{cases}
    1 &\text{if } \sigma_1...\sigma_t \models \varphi \\
    -1 &\text{if } \sigma_1...\sigma_t \models \neg \varphi\\
    0 &\text{otherwise }
\end{cases} \ ,
\label{eq:r_varphi}
\end{equation}
where $\sigma_i = L(s_i,a_i)$. 

Observe that the reward function specified above renders a non-zero reward if the LTL formula can be determined to be satisfied or unsatisfied in a finite number of steps. This is guaranteed to be the case for various fragments of LTL, including co-safe LTL \cite{kupferman2001model} and for so-called LTL-f (the variant of LTL that is interpreted over finite traces). 
For LTL formulas that cannot be verified or falsified in finite time (e.g. $\ltlalways \ltleventually G$), the agent receives no meaningful reward signal. 
One way to address such LTL formulas is to alter the reward function to render an appropriate reward after a very large but finite number of steps (e.g., $10^6$ steps), with commensurate guarantees regarding the resulting policies. 
The topic of an appropriate reward function for general LTL formulas is addressed in \cite{hasanbeig2018logically} and explored in \cite{littman2017environment}.

Finally, note that this reward function might be non-Markovian, as it depends on sequences of states and actions, making the overall learning problem partially observable. We discuss how to deal with this issue below.


\subsection{Instructing RL Agents using LTL}
\label{sec:progression}
 We now formalize the problem of learning a policy that can follow LTL instructions.\footnote{Hereafter, LTL instruction/task may be used interchangeably.} 
Given an MDP without a reward function $\mathcal{M}_e = \tuple{S,T,A,\prob,\gamma,\mu}$, a finite set of propositional symbols $\mathcal{P}$, a labelling function $L: S \times A \rightarrow 2^\mathcal{P}$, a finite (but potentially large) set of LTL formulas $\Phi$, and a probability distribution $\tau$ over those formulas $\varphi \in \Phi$, our goal is to learn an optimal policy $\pi^*(a_t|s_1, a_1, ..., s_t,\varphi)$  w.r.t.\ $R_\varphi(s_1, a_1, ..., s_t, a_t)$ for all $\varphi \in \Phi$. To learn this policy, the agent will sample a new LTL task $\varphi$ from $\tau$ on every episode and, during that episode, it will be rewarded according to $R_\varphi$. The episode ends when the task is completed, falsified, or a terminal state is reached.

A major challenge to solving this problem is that the optimal policy $\pi^*(a_t|s_1, a_1, ..., s_t,\varphi)$ has to consider the whole history of states and actions since the reward function is non-Markovian. To handle this issue, \citet{kuo2020encoding} proposed to encode the policy using a recurrent neural network. However, here we show that we can overcome this complexity by exploiting a procedure known as LTL progression \cite{DBLP:journals/ai/BacchusK00}.

\begin{definition}
    Given an LTL formula $\varphi$ and a truth assignment $\sigma$ over $\mathcal{P}$, $\mprog(\sigma,\varphi)$ is defined as follows:
    \begin{itemize}[topsep=0pt,itemsep=0pt,partopsep=0pt, parsep=0pt]

        \item $\mprog(\sigma,p) = \true$ if $p \in \sigma$, where $p \in \mathcal{P}$
        \item $\mprog(\sigma,p) = \false$ if $p \not\in \sigma$, where $p \in \mathcal{P}$
        \item $\mprog(\sigma,\neg \varphi) = \neg \mprog(\sigma,\varphi)$
        \item $\mprog(\sigma,\varphi\wedge\psi) = \mprog(\sigma,\varphi) \wedge \mprog(\sigma,\psi)$
        \item $\mprog(\sigma,\ltlnext\varphi) = \varphi$
        \item $\mprog(\sigma,\varphi\ltluntil\psi) = \mprog(\sigma,\psi) \vee (\mprog(\sigma,\varphi) \wedge \varphi\ltluntil\psi)$    \end{itemize}
    \label{def:progression}
\end{definition} 
The $\mprog$ operator is a semantics-preserving rewriting procedure that takes an LTL formula and current labelled state as input and returns a formula that identifies aspects of the original instructions that remain to be addressed. Progress towards completion of the task is reflected in diminished remaining instructions.
For instance, the task $\ltleventually (\mathrm{R} \wedge \ltleventually\mathrm{G})$ (\textit{go to red and then to green}) will progress to $\ltleventually\mathrm{G}$ (\textit{go to green}) as soon as $\mathrm{R}$ holds in the environment. We use LTL progression to make the reward function $R_\varphi$ Markovian. We achieve this by (1) augmenting the MDP state with the current LTL task $\varphi$ that the agent is solving, (2) progressing $\varphi$ after each step given by the agent in the environment, and (3) rewarding the agent when $\varphi$ progresses to $\true$ ($+1$) or $\false$ ($-1$). This gives rise to an augmented MDP, that we call a \emph{Taskable MDP}, where the LTL instructions are part of the MDP states:

\begin{definition}[Taskable MDP]
    \label{def:taskableMDP}
    Given an MDP without a reward function $\mathcal{M}_e = \tuple{S,T,A,\prob,\gamma,\mu}$, a finite set of propositional symbols $\mathcal{P}$, a labelling function $L: S \times A \rightarrow 2^\mathcal{P}$, a finite set of LTL formulas $\Phi$, and a probability distribution $\tau$ over $\Phi$, we construct \mbox{\emph{Taskable MDP}} $\mathcal{M}_\Phi = \tuple{S',T',A,\prob',R',\gamma,\mu'}$, where $S' = S \times \operatorname{cl}(\Phi)$, $T' = \{\tuple{s,\varphi} \ |\ s \in T \text{ or } \varphi \in \{\true,\false\} \}$,  $\prob'(\tuple{s',\varphi'}|\tuple{s,\varphi},a)=\prob(s'|s,a)$ if $\varphi' = \mprog(L(s,a),\varphi)$ (zero otherwise), $\mu'(\tuple{s,\varphi}) = \mu(s) \cdot \tau(\varphi)$, and
    $$R'(\tuple{s,\varphi},a) = 
    \begin{cases}
        1 &\text{if } \mprog(L(s,a),\varphi) = \true \\
        -1 &\text{if } \mprog(L(s,a),\varphi) = \false\\
        0 &\text{otherwise }
    \end{cases} \ .
    $$
    Here $\operatorname{cl}(\Phi)$ denotes the \emph{progression closure} of $\Phi$, i.e., the smallest set containing $\Phi$ that is closed under progression. 
\end{definition}

With that, the main theorem of our paper shows that an optimal policy $\pi^*(a_t|s_1, a_1, ..., s_t,\varphi)$ to solve any LTL task $\varphi \in \Phi$ in some environment $\mathcal{M}_e$ achieves the same expected discounted return as
an optimal policy $\pi^*(a|s,\varphi)$ for the Taskable MDP $\mathcal{M}_\Phi$ constructed using $\mathcal{M}_e$ and $\Phi$ (we prove this theorem in Appendix~\ref{app:theorem}).
\begin{theorem}
    Let $\mathcal{M}_\Phi = \tuple{S',T',A,\prob',R',\gamma,\mu'}$ be a Taskable MDP constructed from an MDP without a reward function $\mathcal{M}_e = \tuple{S,T,A,\prob,\gamma,\mu}$, a finite set of propositional symbols $\mathcal{P}$, a labelling function $L: S \times A \rightarrow 2^\mathcal{P}$, a finite set of LTL formulas $\Phi$, and a probability distribution $\tau$ over $\Phi$. Then, an optimal stationary policy $\pi_\Phi^*(a|s,\varphi)$ for $\mathcal{M}_\Phi$ achieves the same expected discounted return as an optimal non-stationary policy $\pi_\varphi^*(a_t|s, a_1, ..., s_t,\varphi)$ for $\mathcal{M}_e$ w.r.t.\ $R_\varphi$, as defined in \eqref{eq:r_varphi}, for all $s \in S$ and $\varphi \in \Phi$. 
    \label{theo:cross-product}
\end{theorem}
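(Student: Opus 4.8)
The plan is to exhibit $\mathcal{M}_\Phi$ as a Markovian reformulation of the non-Markovian reward problem on $\mathcal{M}_e$, with the progressed formula serving as a sufficient statistic for the history. The cornerstone is the semantics-preserving property of progression: for any infinite truth-assignment sequence $\sigma = \tuple{\sigma_0, \sigma_1, \ldots}$ and formula $\psi$, one has $\tuple{\sigma, 0} \models \psi$ iff $\tuple{\sigma, 1} \models \mprog(\sigma_0, \psi)$. I would first establish this by structural induction on $\psi$ (the only nontrivial cases being $\ltlnext$ and $\textltluntil$, which follow by unrolling their semantic clauses against Definition~\ref{def:progression}), and then iterate it: if $\varphi_t$ denotes the result of successively progressing $\varphi$ through $\sigma_0, \ldots, \sigma_{t-1}$, then the finite prefix $\sigma_0 \ldots \sigma_{t-1}$ determines satisfaction (resp.\ falsification) of $\varphi$ exactly when $\varphi_t = \true$ (resp.\ $\varphi_t = \false$). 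This is precisely the statement that the one-step reward $R'$ of the Taskable MDP fires at the same step, with the same sign, as the non-Markovian reward $R_\varphi$ of \eqref{eq:r_varphi}.

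Next I would set up the trajectory correspondence. Because $\prob'(\tuple{s',\varphi'}\mid\tuple{s,\varphi},a) = \prob(s'\mid s,a)$ whenever $\varphi' = \mprog(L(s,a),\varphi)$ and is zero otherwise, the formula component evolves deterministically: along any run of $\mathcal{M}_\Phi$ started at $\tuple{s_0,\varphi}$, the state at time $t$ is $\tuple{s_t, \varphi_t}$ with $\varphi_t$ the iterated progression of $\varphi$ through $L(s_0,a_0),\ldots,L(s_{t-1},a_{t-1})$. Hence $\varphi_t$ is a deterministic function of the $\mathcal{M}_e$-history $s_0, a_0, \ldots, s_t$, and the augmented history in $\mathcal{M}_\Phi$ carries exactly the same information as the plain history in $\mathcal{M}_e$. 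Combined with the reward-matching from the first step and the shared discount $\gamma$, every history-dependent policy on $\mathcal{M}_e$ and its image on $\mathcal{M}_\Phi$ induce identical distributions over reward streams, and therefore identical expected discounted returns.

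Finally I would close the argument in both directions. For the easy direction, any stationary policy $\pi_\Phi(a\mid s,\varphi)$ can be executed on $\mathcal{M}_e$ by tracking $\varphi_t$ on the fly, producing a history policy of equal value; thus the optimal history value on $\mathcal{M}_e$ is at least the optimal value $V^*_{\mathcal{M}_\Phi}$ of the Taskable MDP. For the converse, I would invoke the fact that $\mathcal{M}_\Phi$ is a genuine finite MDP---its reward is now Markovian and its state space $S \times \operatorname{cl}(\Phi)$ is finite since $\operatorname{cl}(\Phi)$ is finite---so its optimal stationary value equals its optimal history-dependent value; pulling the latter back through the information-preserving correspondence shows no history policy on $\mathcal{M}_e$ can exceed $V^*_{\mathcal{M}_\Phi}$. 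Equality of the two optimal returns then follows.

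The step I expect to be the main obstacle is making the sufficient-statistic claim fully rigorous, i.e.\ arguing that collapsing a history policy to one depending on the history only through $\tuple{s_t,\varphi_t}$ loses nothing. The crux is that future rewards depend on the past \emph{solely} through the current progressed formula (by the iterated progression lemma), while future transitions depend on the past solely through $s_t$ (by the Markov property of $\mathcal{M}_e$); once these two facts are combined, the standard result that a finite MDP admits an optimal stationary policy supplies the rest. A secondary subtlety worth handling carefully is the episode-termination convention: since the episode ends the moment $\varphi$ is satisfied or falsified, $R_\varphi$ contributes its single nonzero value at exactly the step where $\varphi_t \in \{\true,\false\}$, matching the terminal states $T'$ of $\mathcal{M}_\Phi$, so the two formulations agree on precisely when and how reward is collected.
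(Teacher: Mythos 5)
Your proposal is correct and follows essentially the same route as the paper's proof: the correctness of LTL progression (which the paper cites from Bacchus and Kabanza rather than reproving) yields step-by-step equivalence of $R_\varphi$ and $R'$, a bidirectional step-by-step policy simulation transfers values between $\mathcal{M}_e$ and $\mathcal{M}_\Phi$, and the existence of an optimal stationary policy for the (genuine, finite) MDP $\mathcal{M}_\Phi$ closes the argument. Your treatment of the sufficient-statistic claim and the episode-termination convention is somewhat more explicit than the paper's sketch, but it is the same proof.
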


\subsection{Discussion and Bibliographical Remarks}
\label{sec:ltl_discussion}

\begin{figure}[tb]
  \centering
  \small
  
  \begin{tikzpicture}
  \node at (-2,0.145) {\includegraphics[width=0.35\columnwidth,valign=t]{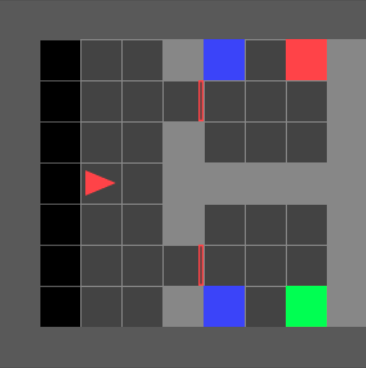}};
  \node at (1.3,0) {\includegraphics[width=0.43\columnwidth,valign=t]{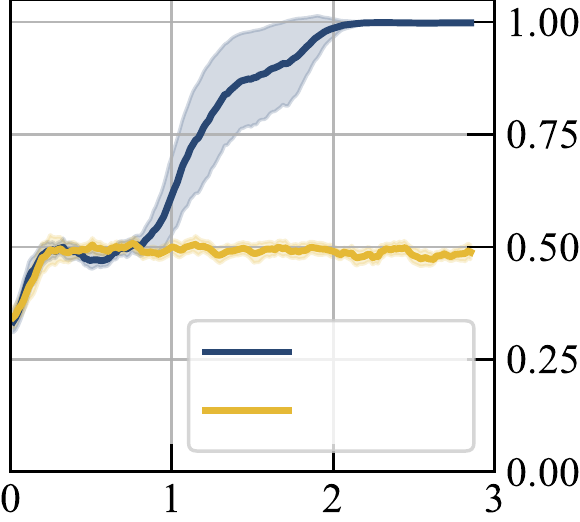}};
  
  \draw (1.7,-0.55) node {Ours}; 
  \draw (1.9,-0.95) node {Myopic};
  
  \draw (3.3,0.1) node[rotate=-90] {Total reward};
  \draw (1,-1.75) node {Frames (millions)};

  \normalsize
  \draw (-2,-2.2) node {(a)};
  \draw (1.1,-2.2) node {(b)};
  \end{tikzpicture}
  \vspace{-3mm}
  \caption{(a) A toy minigrid environment where doors lock upon entry. The task is equally likely to be either \emph{go to blue then red} or \emph{go to blue then green}. (b) A Myopic policy only succeeds in 50\% of tasks while our approach obtains the maximum reward. }
  \label{fig:toy}
  \vspace{-3mm}
\end{figure}

Two recent works have explored how to teach RL agents to follow unseen instructions using temporal logic \cite{kuo2020encoding,leon2020systematic}. Here we discuss the theoretical advantages of our approach over theirs. \citeauthor{kuo2020encoding} propose to learn a policy $\pi^*(a_t|s_1, a_1, ..., s_t,\varphi)$ (using a recurrent neural network) by solving a partially observable problem (i.e., a POMDP). In contrast, we propose to learn a policy $\pi^*(a_t | s_t, \varphi)$ in a Taskable MDP $\mathcal{M}_\Phi$. Since solving MDPs is easier than solving POMDPs (MDPs can be solved in polynomial time whereas POMDPs are undecidable), this gives our approach a theoretical advantage which results in better empirical performance (as shown in Section~\ref{sec:experiments}). 
 

\citet{leon2020systematic} follow a different approach. They instruct agents using a fragment of LTL (which only supports the temporal operator \textit{eventually}) and define a reasoning module that automatically 
returns a proposition to satisfy which makes progress towards solving the task. Thus, the agent only needs to learn a policy $\pi(a|s,p)$ conditioned on the state $s$ and a proposition $p \in \mathcal{P}$. However, this approach is myopic -- it optimizes for solving the next subtask without considering what the agent must do after and, as a result, might converge to suboptimal solutions. This is a common weakness across recent approaches that instruct RL agents \citep[e.g.,][]{DBLP:conf/nips/SohnOL18,DBLP:conf/nips/JiangGMF19,DBLP:conf/iclr/SunWL20}.

As an example, consider (again) the MiniGrid from Figure~\ref{fig:toy}. 
Observe the two red doors at the entrance to each room. These doors automatically lock upon entry so the agent cannot visit both rooms. Suppose the agent has to solve two LTL tasks, uniformly sampled at the beginning of each episode: $\ltleventually (\mathrm{B} \wedge \ltleventually \mathrm{G})$ (\textit{go to a blue square and then to a green square}) or $\ltleventually (\mathrm{B} \wedge \ltleventually \mathrm{R})$ (\textit{go to a blue square and then to a red square}). For both tasks, a myopic approach will tell the agent to first achieve $\ltleventually \mathrm{B}$ (\textit{go to blue}), but doing so without considering where the agent must go after might lead to a dead end (due to the locking doors). In contrast, an approach that learns an optimal policy $\pi^*(a|s,\varphi)$ for $\mathcal{M}_\Phi$ can consistently solve these two tasks (Figure~\ref{fig:toy}(b)).

The theoretical advantages of our approach however comes with a cost. Learning $\pi^*(a|s,\varphi)$ is harder than learning a myopic policy $\pi^*(a|p)$ and, hence, it seems reasonable to expect that a myopic approach will generalize better to unseen instructions. However, we did not observe this behaviour in our experiments.

\section{Model Architecture} \label{sec:arch}


\begin{figure*}[ht]
    \centering
    \begin{tikzpicture}
    \small
    
    \node at (0,0) {\includegraphics[width=1.8\columnwidth]{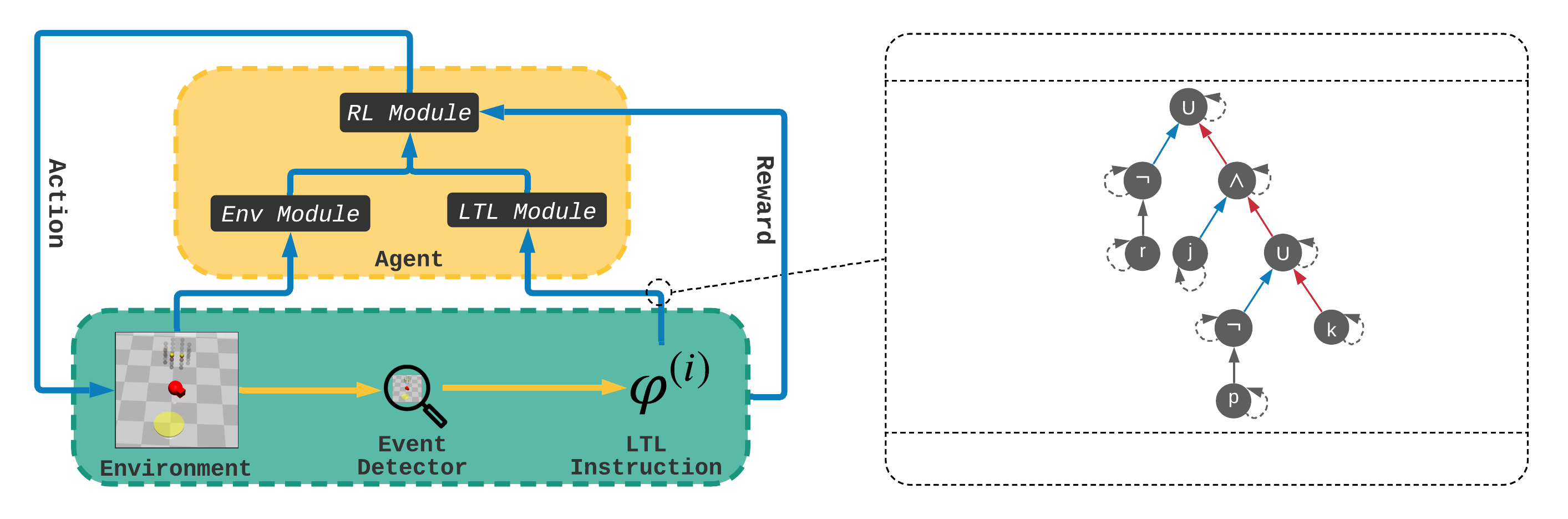}};
    
    \draw (4.3,1.9)  node {$\varphi^{(i)}\!\!:\;\neg r\ltluntil(j\wedge(\neg p \ltluntil k))$};
    \draw (4.05,-1.87)  node {$pre(\varphi^{(i)})\!\!: (\ltluntil, (\neg, r) , (\wedge, j, (\ltluntil, (\neg, p), k)))$};
    \draw (2.6,0.1) node {$G_{\varphi}^{(i)}\!\!: $};
    \scriptsize
    \draw (-3.58,-0.7) node {$\mathcal{M}_{\Phi}$};
    \draw (-2.35,-1) node {$\mprog(\sigma,\varphi^{(i-1)}\!)$};
    
    \normalsize
    \draw (-3.45,-2.5) node {(a)};
    \draw (4.1,-2.5) node {(b)};
    
    \end{tikzpicture}
    \caption{(a) Our RL framework for solving LTL tasks with the Taskable MDP $\mathcal{M}_{\Phi}$ and our agent's modular structure. In each step, the agent receives an environment observation and an LTL instruction as input. LTL instructions are automatically progressed based on signals from the event detectors, allowing a Markov policy to solve non-Markovian tasks. (b) Possible representations of an LTL formula. \emph{Top}: standard LTL syntax. \emph{Middle}: abstract syntax tree. \emph{Bottom}: prefix notation.}
     \label{fig:architecture}
\end{figure*}

In this section, we build on the RL framework with LTL instructions from Section~\ref{sec:multitask-ltl} and explain a way to realize this approach in complex environments using deep RL. 


In each episode, a new LTL task $\varphi$ is sampled. At every step, the environment returns an observation, the event detectors return
a truth assignment $\sigma$ of all propositions in $\mathcal{P}$ and the LTL task is automatically progressed to $\varphi := \mathrm{prog}(\sigma, \varphi)$. We used \texttt{Spot} \cite{spotsoftware} to simplify $\varphi$ to an equivalent form after each progression. The agent then receives both the environment observation and the progressed formula and emits an action via a modular architecture consisting of three trainable components (see Figure~\ref{fig:architecture}(a)):




\textbf{1.\;\;\!\!Env Module:} an environment-dependent model that preprocesses the observations
(e.g., a convolutional or a fully-connected network).

\textbf{2. LTL Module:} a neural encoder for the LTL instructions (discussed below).


\textbf{3.\;\;\;\!\!\!RL Module:}
a module which decides actions to take in the environment, based on observations encoded by the Env Module and the current (progressed) task encoded by the LTL module. While our approach is agnostic to the choice of RL algorithm, in our experiments we opted for \emph{Proximal Policy Optimization} (PPO) \cite{DBLP:journals/corr/SchulmanWDRK17} for its strong generalization performance \cite{DBLP:conf/icml/CobbeKHKS19}. 

\subsection{LTL Module}


\label{sec:ltl_module}
LTL formulas can be encoded through different means, the simplest of which is to apply a sequence model (e.g., LSTM) to the input formula. However, given the tree-structured nature of these formulas, \emph{Graph Neural Networks} (GNNs)  \cite{gnn_gori, scarselli2008graph} may provide a better inductive bias. 
This selection is in line with recent works in programming languages and verification, where GNNs are used to embed the \emph{abstract syntax tree} (AST) of the input program \cite{DBLP:journals/corr/abs-1711-00740, DBLP:conf/nips/SiDRNS18}. 

Out of many incarnations of GNNs, we choose a version of \emph{Relational Graph Convolutional Network} (R-GCN) \cite{DBLP:conf/esws/SchlichtkrullKB18}. R-GCN works on labeled graphs $G=(\mathcal{V}, \mathcal{E}, \mathcal{R})$ with nodes $v, u \in \mathcal{V}$ and typed edges $(v, r, u) \in \mathcal{E}$, where $r\in \mathcal{R}$ is an edge type.
Given $G$ and a set of input node features $\{\boldsymbol{x}^{(0)}_v | \forall v\in \mathcal{V}\}$, the R-GCN maps the nodes to a vector space, through a series of \emph{message passing} steps. At step $t$, the embedding $\boldsymbol{x}^{(t)}_v \in \mathbb{R}^{d^{(t)}}$ of node $v$ is updated by a normalized sum of the transformed feature vectors of neighbouring nodes, followed by an element-wise activation function $\sigma(.)$:
\begin{equation}\label{eq:rgcn}
    \boldsymbol{x}^{(t+1)}_v = \sigma \Bigg( \sum_{r \in \mathcal{R}} \sum_{u \in \mathcal{N}^r_G(v)} \frac{1}{|\mathcal{N}^r_G(v)|}W_r \boldsymbol{x}^{(t)}_u \Bigg),
\end{equation}
where $\mathcal{N}^r_G(v)$ denotes the set of nodes adjacent to $v$ via an edge of type $r\in \mathcal{R}$. Note that different edge types use different weights $W_r$ and weight-sharing is done only for edges of the same type at each iteration.

We represent an LTL formula $\varphi$ as a directed graph ${G_{\varphi}=(V_{\varphi}, E_{\varphi}, R)}$, as shown in Figure \ref{fig:architecture}(b). This is done by first creating $\varphi$'s parse tree, where each subformula is connected to its parent operator via a directed edge, and then adding self-loops for all the nodes. We distinguish between $|\mathcal{R}|=4$ edge types: 1. \emph{Self-loops}, 2. \emph{Unary}: for connecting the subformula of a unary operator to its parent node, 3. \emph{Binary\_left} (\emph{resp.} 4. \emph{Binary\_right}): for connecting the left (\emph{resp.} right) subformula of a binary operator to its parent node. R-GCN performs $T$ message passing steps according to Equation~(\ref{eq:rgcn}) over $G_\varphi$. The inclusion of self-loops is to ensure that the representation of a node at step $t+1$ is also informed by its corresponding representation at step $t$. Due to the direction of the edges in $G_{\varphi}$, the messages flow in a bottom-up manner and after $T$ steps we regard the embedding of the root node of $G_{\varphi}$ as the embedding of $\varphi$.


We experimented with both sequence models and R-GCN. In each case we first create one-hot encodings of the formula tokens (operators and propositions). For sequence models we convert the formula to its prefix notation $pre(\varphi)$ and then replace the tokens with their one-hot encodings. For R-GCN, these encodings serve as input node features $\boldsymbol{x}^{(0)}_v$.

\subsection{Pretraining the LTL Module}\label{sec:pretraining}

Simultaneous training of the LTL Module with the rest of the model can be a strenuous task. We propose to pretrain the LTL module, taking advantage of the environment-agnostic nature of LTL semantics and the agent's modular architecture. Formally, given a target Taskable MDP $\mathcal{M}_{\Phi}$, our aim is to learn useful encodings for formulas in $\Phi$ and later use those encodings in solving $\mathcal{M}_{\Phi}$. We cast the pretraining itself as solving a special kind of Taskable MDP.

\begin{definition}[\simLtl]
\label{def:bootcamp}
Given a set of formulas $\Phi$ and a distribution $\tau$ over $\Phi$, we construct a single-state MDP (without the reward function) $\mathcal{M}_{\varnothing}=\tuple{S,T,A,\prob,\gamma,\mu}$, where $S=\{s_0\}$, $T=\varnothing$, $A =\mathcal{P}$, $\prob(s_0 | s_0, .)=1$, and $\mu(s_0)=1$. The labelling function is given by $L(s_0, p)=\{p\}$. Finally, the $\simLtl$ environment is defined to be the Taskable MDP given by $\mathcal{M}_{\varnothing}$, $\mathcal{P}$, $L$, $\Phi$, and $\tau$.
\end{definition}

Intuitively, the \simLtl task is to progress formulas $\varphi \sim \tau(\Phi)$ to $\true$ in as few steps as possible by setting a single proposition to $\true$ at each step. Hence our scheme is: (1) train to convergence on \simLtl with formula set $\Phi$ and task distribution $\tau$ of the target Taskable MDP $\mathcal{M}_{\Phi}$; (2) transfer the learned LTL Module as the initial LTL Module in $\mathcal{M}_{\Phi}$. While many pretraining schemes are possible, this one involves a simple task which can be viewed as an abstracted version of the downstream task.


Since pretraining does not require interaction with a physical environment, it is more wall-clock efficient than training the full model on the downstream environment. Furthermore, the LTL Module is robust to changes in the environment as long as $\Phi$ and $\tau$ remain the same, thanks to the modular architecture of our model. In Section \ref{sec:experiments} we demonstrate the empirical benefits of pretraining the LTL Module. 

\section{Experiments}
\label{sec:experiments}

We designed our experiments to investigate whether RL agents can learn to solve complex, temporally extended tasks specified in LTL.
Specifically we answer the following questions: \begin{enumerate*}[label=\textbf{(\arabic*)}]
\item \textbf{Performance:} How does our approach fare against baselines that do not utilize LTL progression or are myopic?
\item \textbf{Architecture:} What's the effect of different architectural choices on our model's performance?
\item \textbf{Pretraining:} Does pretraining the LTL Module result in more rapid convergence in novel downstream environments?
\item \textbf{Upward Generalization:} Can the RL agent trained using our approach generalize to larger instructions than those seen in training?
\item \textbf{Continuous Action-Space:} How does our approach perform in a continuous action-space domain?
\end{enumerate*}
\let\thefootnote\relax\footnotetext{Our code and videos of our agents are available at \url{github.com/LTL2Action/LTL2Action}.}
\subsection{Experimental Setup}

We ran experiments across different environments and LTL tasks, where the tasks vary in length and difficulty to form an implicit curriculum. 
To measure how well each approach generalizes to unseen instructions we followed the methodology proposed by \citet{DBLP:conf/icml/CobbeKHKS19}. 
In every episode, the agent faces some LTL task sampled i.i.d.\ from a large set of possible tasks $\Phi$. We evaluate how well the learned policy generalizes to new samples from $\Phi$, the majority of which have not been previously seen, with high probability. We also consider out-of-distribution generalization to larger tasks than those in $\Phi$, which are guaranteed to be unseen.


\subsubsection{Environments}
\label{sec:envs}

We use the following environments in our experiments:

\textbf{\letterWorld:} A $7\times7$ discrete grid environment, similar to \citealt{andreas2017modular}. Out of the 49 squares, 24 are associated with 12 unique propositions/letters (each letter appears twice in the grid, allowing more than one way to satisfy any proposition). At each step the agent can move along the cardinal directions. The agent observes the full grid (and letters) from an egocentric point of view as well as the current LTL task ($\gamma$=0.94, timeout=75 steps).

\textbf{\zoneEnv:} We co-opted OpenAI's Safety Gym \cite{safetygym} which has a continuous action-space. Our environment (Figure \ref{fig:safety_results}(a)) is a walled 2D plane with 8 circles (2 of each colour), called ``zones,'' that correspond to task propositions. We use Safety Gym's Point robot with actions for steering and forward/backward acceleration.
It observes lidar information towards the zones and other sensory data (e.g., accelerometer, velocimeter). The zones and the robot are randomly positioned on the plane at the start of each episode and the robot has to visit and/or avoid certain zones based on the LTL task 
($\gamma$=0.998, timeout=1000 steps).

\textbf{\simLtl:} The Taskable MDP from Section~\ref{sec:arch}, only used to pretrain the LTL Module ($\gamma$=0.9, timeout=75 steps).


\subsubsection{Tasks}
\label{sec:tasks}


Our experiments consider two LTL task spaces, where tasks are randomly sampled via procedural generation. We provide a high-level description of the two task spaces, with more details in Appendix~\ref{app:ltl_tasks}.


\textbf{Partially-Ordered Tasks:} A task consists of multiple sequences of propositions which can be solved in parallel. However, the propositions within each sequence must be satisfied in order. For example, a possible task (specified informally in English) is: \emph{``satisfy $C$, $A$, $B$ in that order, and satisfy $D$, $A$ in that order"} -- where one valid solution would be to satisfy $D$, $C$, $A$, $B$ in that order. The number of possible unique tasks is over $5 \times 10^{39}$.

\textbf{Avoidance Tasks:} This set of tasks is similar to Partially-Ordered Tasks, but includes propositions that must also be avoided (or else the task is failed). The propositions to avoid change as different parts of the task are solved. The number of possible unique tasks is over $970$ million.

Note that the formulas we consider contain up to 75 tokens (propositions and operators) in training and 210 tokens in the upward generalization experiments, while past related works only considered up to 20 tokens \citep{kuo2020encoding, leon2020systematic}.

\subsubsection{Our Methods and Baselines}
We experimented with three variants of our approach, all exploiting LTL progression and utilizing PPO for policy optimization. They differed in the type of LTL Module used, namely: GNN, GRU, and LSTM. In our plots, we refer to these approaches as \gnnp, \grup, and, \lstmp, respectively. 
Details about neural network architectures and PPO hyperparameters can be found in Appendix Sections \ref{app:networks}, \ref{app:hyperparams}, respectively.

We compared our method against three baselines. The \emph{No LTL} baseline ignores the LTL instructions, but learns a non-stationary policy using an LSTM. This baseline tells us if the agent can learn a policy that works well regardless of the LTL instruction. The \emph{GRU} baseline is inspired by \citet{kuo2020encoding}. This approach learns a policy that considers the LTL instructions but does not progress the formula over time. Instead, it learns a non-stationary policy encoded using a GRU (as discussed in Section~\ref{sec:ltl_discussion}).


Lastly, the \emph{Myopic} baseline was inspired by \citet{leon2020systematic} and other similar approaches \citep[e.g.,][]{andreas2017modular, DBLP:conf/icml/OhSLK17,DBLP:conf/icra/XuNZGGFS18, DBLP:conf/nips/SohnOL18,DBLP:conf/iclr/SunWL20}. In this baseline, a reasoning technique is used to tell the agent which propositions to achieve next in order to solve the LTL task. Specifically, the agent observes whether making a particular proposition $\true$ would (a) progress the current formula, (b) have no effect, or (c) make it unsatisfiable. Given these observations, the agent then learns a Markovian policy.
Note that this approach might converge to suboptimal solutions (see Section~\ref{sec:ltl_discussion}).




\subsection{Results}
\begin{figure}[t]
    \centering
    \begin{tikzpicture}
    \small
    
    \node at (2.5,0) {\includegraphics[width=0.95\columnwidth]{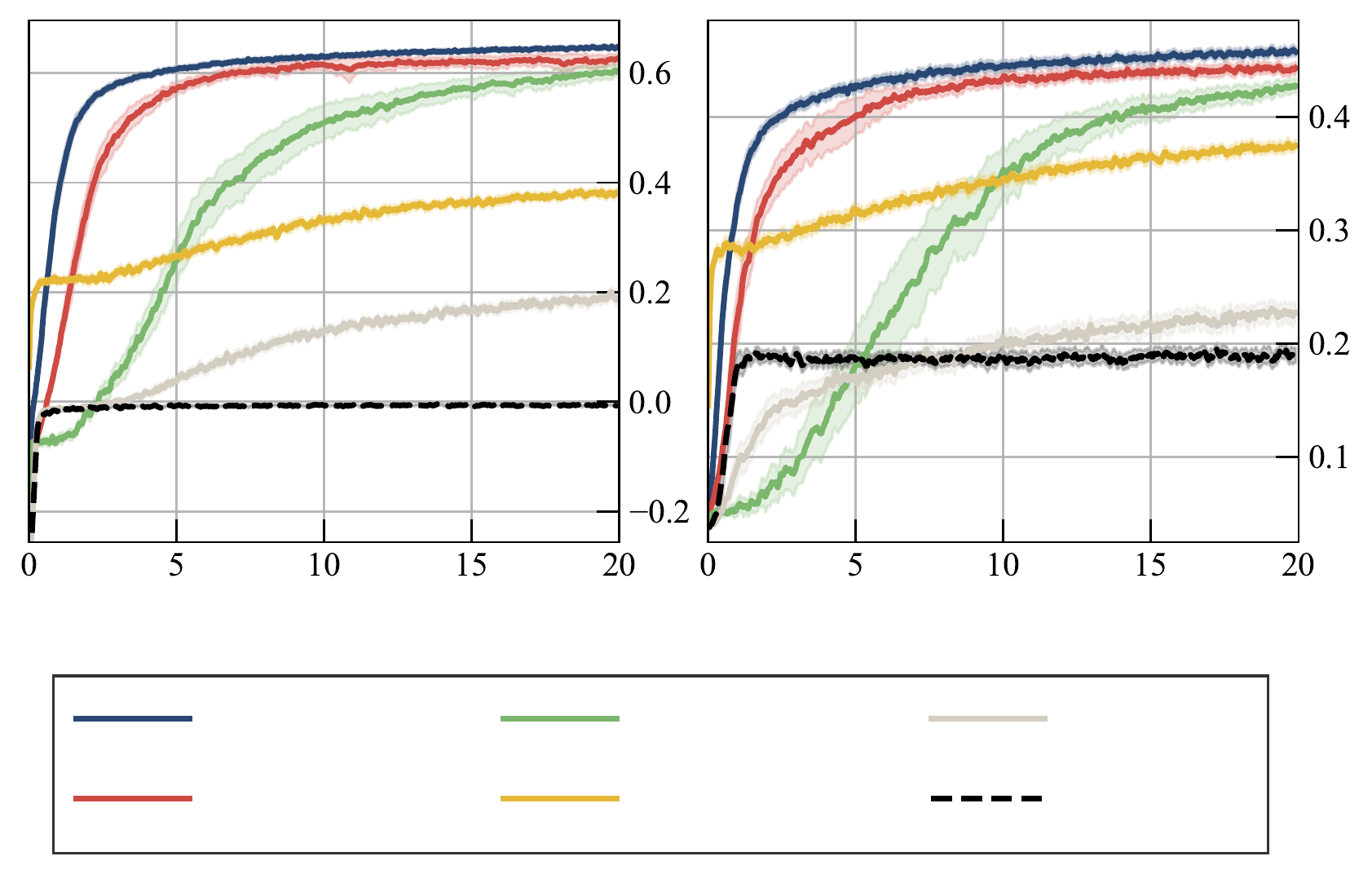}};
    \draw (0.4,2.6) node {Avoidance Tasks}; 
    \draw (4.35,2.6) node {Partially-Ordered Tasks}; 
    \draw (-1.5,0.9) node[rotate=90] {Discounted return};
    \draw (2.3,-1.1) node {Frames (millions)}; 
    
    \draw (0.4,-1.6) node {\gnnp};
    \draw (0.4,-2.05) node {\grup}; 
    
    \draw (2.9,-1.6) node {\lstmp};
    \draw (2.7,-2.05) node {Myopic}; 
    
    \draw (5.1,-1.6) node {GRU};
    \draw (5.2,-2.05) node {No LTL}; 
    
    \end{tikzpicture}
  \vspace{-8mm}
  \caption{ Our approaches using LTL progression (marked by $\bullet_\mathrm{prog}$) outperformed other baselines on \letterWorld. We report \emph{discounted return} over the duration of training (averaged over 30 seeds, with 90\% confidence intervals).
  } 
  \label{fig:letter_world_graph}
\end{figure}

\begin{figure}[t]
    \centering
    \begin{tikzpicture}
    \small
    
    \node at (2.5,0) {\includegraphics[width=0.95\columnwidth]{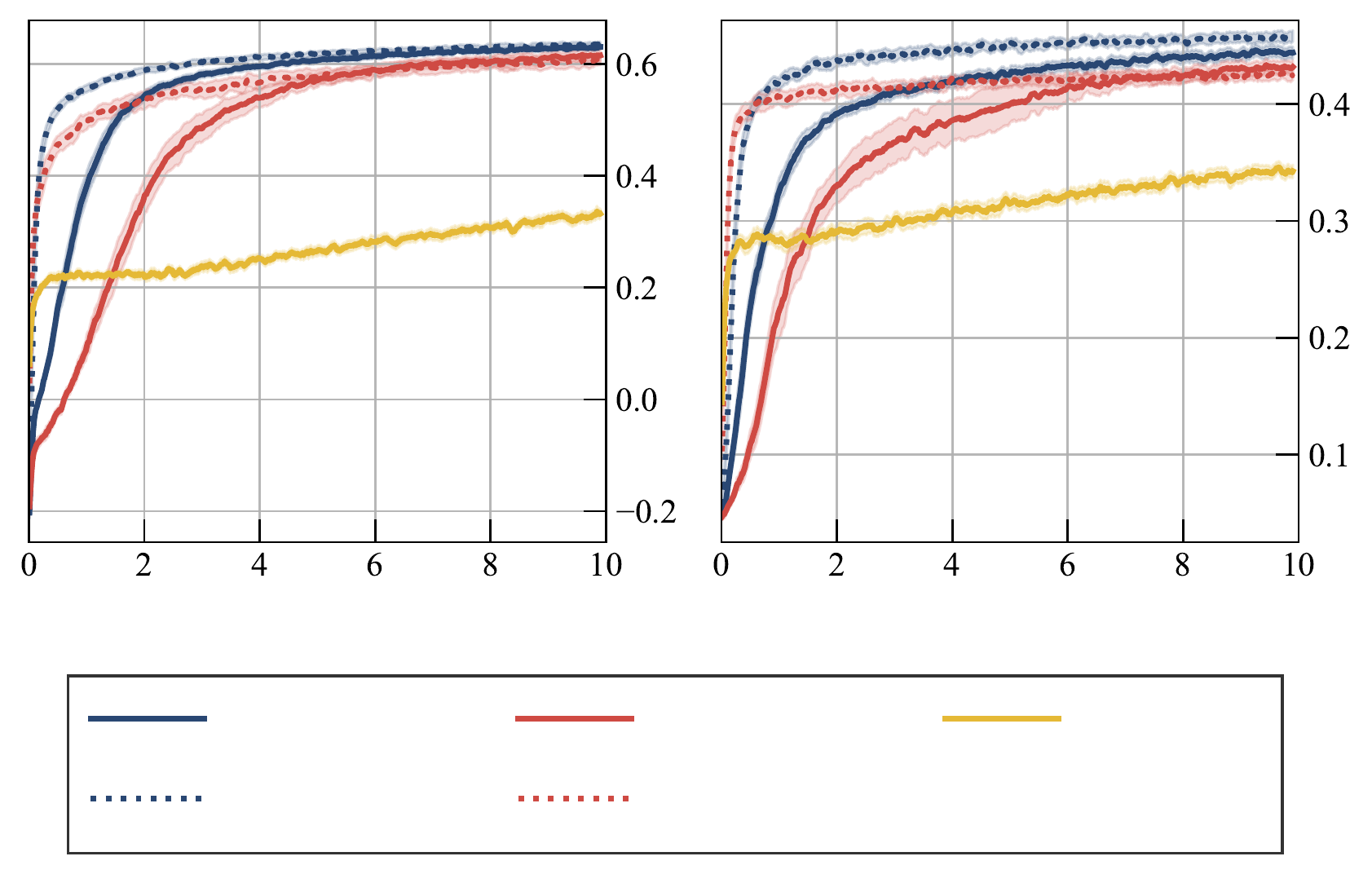}};
    \draw (0.4,2.6) node {Avoidance Tasks}; 
    \draw (4.35,2.6) node {Partially-Ordered Tasks}; 
    \draw (-1.5,0.9) node[rotate=90] {Discounted return};
    \draw (2.3,-1.1) node {Frames (millions)}; 
    
    \draw (0.4,-1.6) node {\gnnp};
    \draw (0.4,-2.05) node {\gnnpp}; 
    
    \draw (2.9,-1.6) node {\grup};
    \draw (2.9,-2.05) node {\grupp}; 
    
    \draw (5.25,-1.6) node {Myopic};
    
    \end{tikzpicture}
  \vspace{-8mm}
  \caption{
  Pretrained LTL models (marked by $\bullet^\mathrm{pre}$) showed better sample-efficiency than non-pretrained versions on \letterWorld. The \emph{Myopic} baseline is shown for comparison. We report \emph{discounted return} over the duration of training (averaged over 30 seeds, with 90\% confidence intervals).}
  \label{fig:transfer_graph}
  \vspace{-2mm}
\end{figure}
We conduct our experiments on \letterWorld, except for the continuous action-space tests where we used \zoneEnv. 


\begin{figure}[h]
    \centering
    \begin{tikzpicture}
    \small
    
    \node at (-3,0.07) {\includegraphics[width=0.42\columnwidth,valign=c]{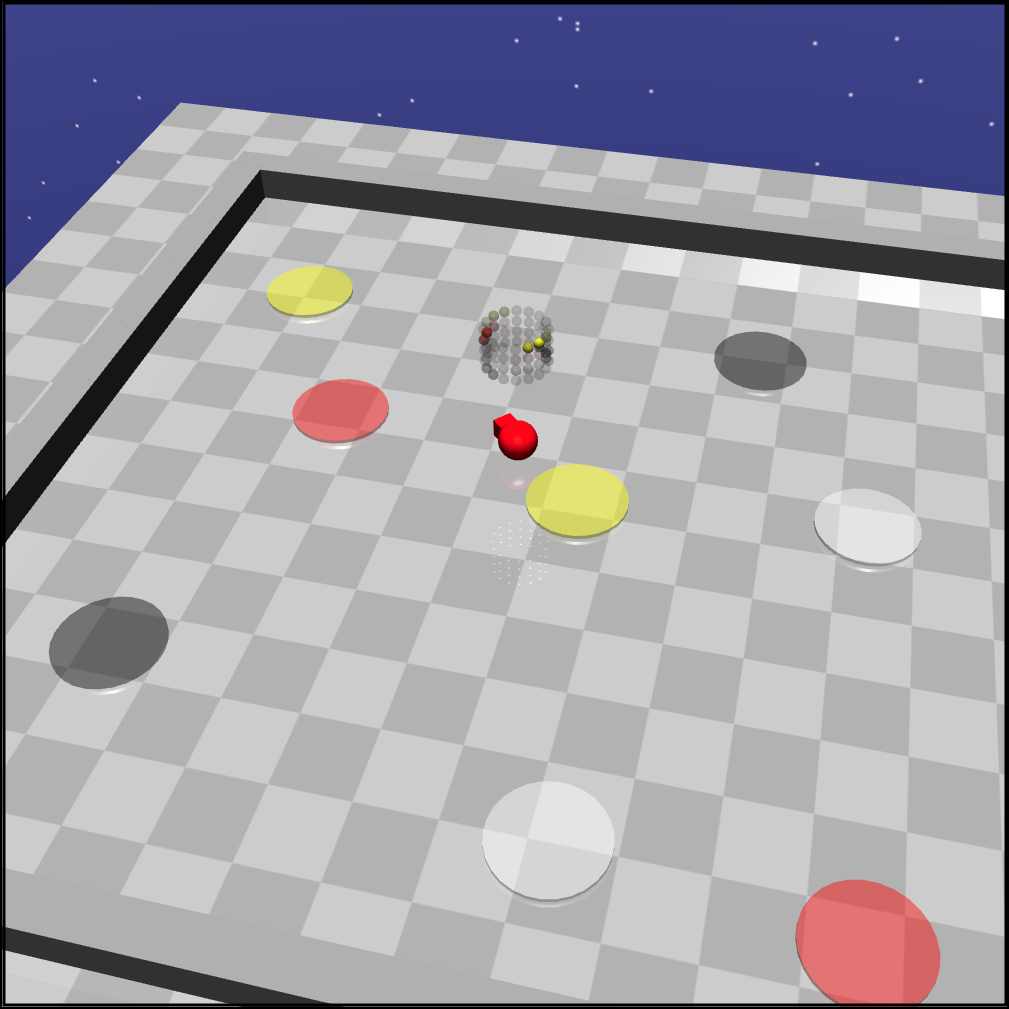}};
    \node at (0.85,0) {\includegraphics[width=0.5\columnwidth,valign=c]{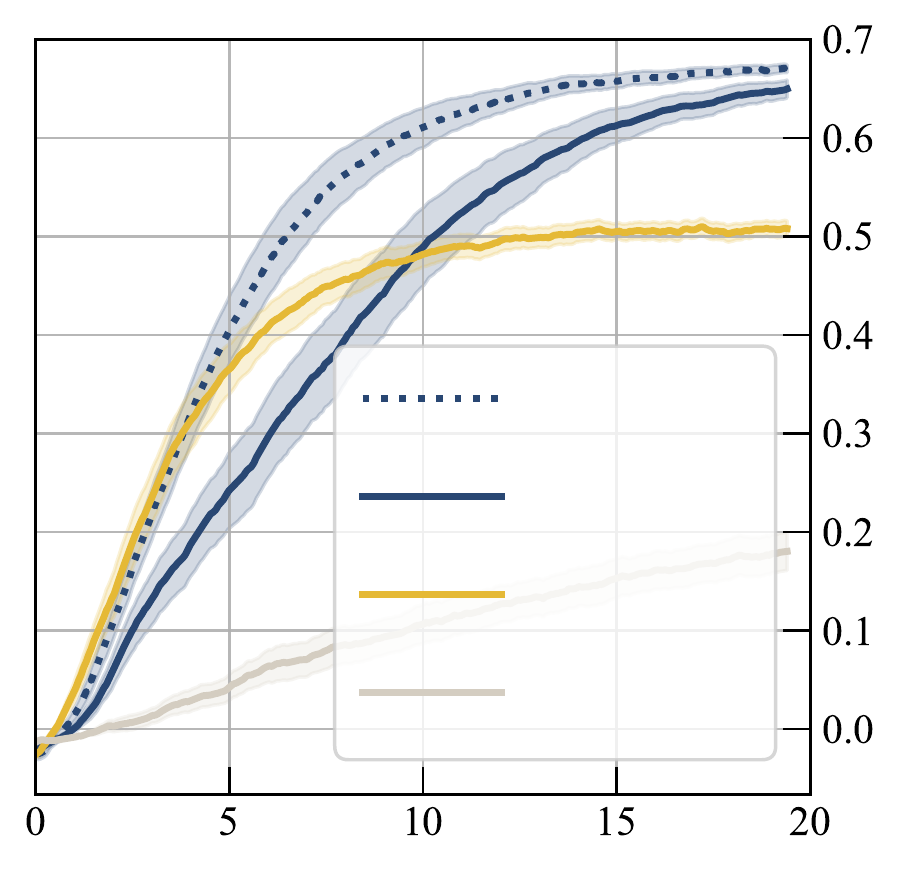}};
    
    \draw (1.75,0.15) node {\gnnpp}; 
    \draw (1.75,-0.3) node {\gnnp};
    \draw (1.65,-0.7) node {Myopic};
    \draw (1.55,-1.2) node {GRU};
    
    \draw (3,0) node[rotate=-90] {Discounted return};
    \draw (1,-2.1) node {Frames (millions)}; 
    
    \normalsize
    \draw (-3,-2.55)  node {(a)};
    \draw (0.8,-2.55) node {(b)};
    
    \end{tikzpicture}
  \vspace{-3mm}
  \caption{(a) The \texttt{ZoneEnv} continuous control environment with coloured zones as LTL propositions. Tasks involve reaching zones of certain colours in the correct order (while avoiding zones of the incorrect colour in the Avoidance Task). (b)
  Both GNN architectures outperformed the \emph{Myopic} and GRU without progression baselines on \texttt{ZoneEnv} on Avoidance Tasks. Pretraining resulted in faster convergence. We report \emph{discounted return} over the duration of training (averaged over 30 seeds, with 90\% confidence intervals).}
  \label{fig:safety_results}
  \vspace{-3mm}
\end{figure}


\begin{table}[t]
\small
\centering
\caption{
Trained RL agents are evaluated on the training distribution of tasks, as well as out-of-distribution tasks with increased depth of sequences, and increased number of conjuncts. In each entry, we report \textit{total reward} and \textit{discounted return} (in parentheses) averaged over 30 seeds and 100 episodes per seed. The highest value over all approaches is bolded.}
\vspace{1mm}
\label{table:generalization}

\begin{tabular}{*{4}c}
\cmidrule[\heavyrulewidth]{2-4} 
& \multicolumn{1}{c}{I.I.D} &
  \multicolumn{1}{c}{$\uparrow$ Depth} &
  \multicolumn{1}{c}{$\uparrow$ Conjuncts} \\ \cmidrule[\heavyrulewidth]{2-4}
& \multicolumn{3}{c}{(a) Avoidance Tasks} \\ \cmidrule(lr){2-4} 
\multicolumn{1}{l}{\gnnpp} &  0.97(0.65) & \textbf{0.99}(\textbf{0.35}) & \textbf{0.64}(0.18)  \\ 
\multicolumn{1}{l}{\gnnp} & \textbf{0.98}(\textbf{0.66}) & 0.98(0.33)   & 0.57(0.14) \\ 

\multicolumn{1}{l}{\grup} & 0.89(0.63) & 0.42(0.07) & 0.58(\textbf{0.25})  \\ 
\multicolumn{1}{l}{GRU}      & 0.32(0.22) & -0.03(-0.01) & -0.35(-0.16) \\ 
\multicolumn{1}{l}{Myopic}  & 0.88(0.50) & 0.71(0.07) & 0.58(0.11) \\

 \cmidrule{2-4} &
  \multicolumn{3}{c}{(b) Partially-Ordered Tasks} \\ 
  \cmidrule{2-4}
\multicolumn{1}{l}{\gnnpp} & \textbf{1.0}(\textbf{0.48}) &  \textbf{0.98}(\textbf{0.0088}) & \textbf{0.99}(\textbf{0.0380})  \\ 
\multicolumn{1}{l}{\gnnp}  & \textbf{1.0}(0.47) & 0.97(0.0074)  & 0.98(0.0340) \\ 
\multicolumn{1}{l}{\grup}  & \textbf{1.0}(0.46)  & 0.29(0.0005)  & \textbf{0.99}(0.0252) \\ 
\multicolumn{1}{l}{GRU}   & 0.87(0.24) & 0.03(0.0000) & 0.54(0.0075) \\ 
\multicolumn{1}{l}{Myopic} & \textbf{1.0}(0.40) & 0.94(0.0042) & \textbf{0.99}(0.0221) \\\bottomrule
\end{tabular}
\vspace{-2mm}
\end{table}

\textbf{Performance}
Figure~\ref{fig:letter_world_graph} shows the results on Partially-Ordered and Avoidance Tasks when tested on i.i.d.\ samples from $\Phi$. The results show that: (1) LTL progression significantly improved generalization, (2) A compositional architecture such as GNN learned to encode LTL formulas better than sequential ones such as GRU and LSTM, and (3) The myopic baseline initially learned quickly, but it was eventually outperformed by all our methods.


\textbf{Pretraining}
We investigated the effects of pretraining the LTL Module (Section~\ref{sec:pretraining}). Figure~\ref{fig:transfer_graph} shows the results for GNN and GRU encoders, with and without pretraining, as well as the myopic baseline. We observed that pretraining the LTL Module accelerated learning for both encoders. 



\textbf{Upward Generalization} 
%
The preceding experiments indicate that an RL agent trained with our approach generalizes to new, incoming tasks from a large, but fixed training distribution $\Phi$. Here, we consider \emph{upward generalization} to larger tasks than seen in training. 

We evaluated trained agents on Partially-Ordered and Avoidance Tasks with: (a) longer sequences, and (b) more conjuncts (i.e., more tasks to be completed in parallel) than seen in training.
For Avoidance Tasks, we increased the max depth of formulas from 3 (in training) to 6, and the max number of conjuncts from 2 to 3. For Partially-Ordered Tasks, we increased the max depth from 5 to 15, and the max number of conjuncts from 4 to 12.

We report the generalization performance of various baselines in Table~\ref{table:generalization}. The myopic and LTL progression-based approaches significantly outperformed the GRU baseline without progression, suggesting that decomposing the task is essential for generalization. Pretraining also marginally improved the GNN (with progression) baseline. We highlight the impact of architecture on generalization -- GNN outperformed GRU in most cases. This aligns with other works showing scalability of GNNs to larger formulas \cite{selsam2018learning, vaezipoor2020learning}. Note that upward generalization on conjuncts for Avoidance tasks is particularly challenging since only up to 2 conjuncts were observed in training.

\textbf{Continuous Action-Space}
Figure~\ref{fig:safety_results} shows the results on \zoneEnv for a reduced version of the Avoidance Task. We note that our approaches ({\gnnpp} and {\gnnp}) solved almost all tasks, however \emph{Myopic} and the GRU baseline without progression failed to solve many tasks within the timeout. 
These results reaffirm the generalizability of our approach on a continuous environment.

\section{Discussion}

Our experiments demonstrated the following key findings: \textbf{(a)} encoding full task instructions converges to better solutions than myopic methods; \textbf{(b)} LTL progression improves learning and generalization; \textbf{(c)} LTL semantics can be pretrained to improve downstream learning; \textbf{(d)} our method can zero-shot generalize to new instructions significantly larger than those seen in training.

We note that architecture is an important factor for encoding LTL. GNNs appear to more effectively encode the compositional syntax of LTL, whereas, GRUs are more wall-clock efficient (by roughly 2 to 3$\times$) due to the overhead of constructing abstract syntax trees for GNNs.

Our results are encouraging and open several directions for future work. This includes exploring ways to build general LTL models which fully capture LTL semantics without assuming access to a distribution of tasks. Similar to most works focusing on formal language in RL (e.g. \citealt{DBLP:conf/atal/IcarteKVM18, alur2019, leon2020systematic}), we assume a noise-free \emph{labelling function} is available to identify high-level domain features. An important question is whether this labelling function can be learned, and how an RL agent can handle the resultant uncertainty.


In this work, we investigated generalization to new formulas over a fixed set of propositions. However, it is also interesting to study how to generalize to formulas with \emph{new propositions}. We note that some existing works have tackled this setting \cite{hill2021grounded, leon2020systematic, lake2019compositional}. One way of extending our framework to also generalize to \emph{unseen} propositions is to encode the propositions using some feature representation other than a one-hot encoding. We include some preliminary experiments in Appendix~\ref{sec:object-gen} showing that, by changing the feature representation of the propositional symbols, our framework is indeed able to generalize to tasks with unseen objects and propositions. But further investigation is needed. 

\section{Related Work}


This paper builds on past work in RL which explores using LTL (or similar formal languages) for reward function specification, decomposition, or shaping \citep[e.g.,][]{aksaray2016q,li2016temporal, littman2017environment,DBLP:conf/icml/IcarteKVM18,li2018policy,camacho2017decision,DBLP:conf/ijcai/CamachoIKVM19,deeplcrl,alur2019,DBLP:conf/ijcai/0005T19,hasanbeig2018logically,hasanbeig2020deep,DBLP:conf/kr/GiacomoFIPR20,DBLP:conf/aaai/GiacomoIFP20,jiang2020temporal}. However, most of these methods are limited to learning a single, fixed task in LTL. \citet{DBLP:conf/atal/IcarteKVM18} explicitly focuses on learning multiple LTL tasks optimally, but their approach is unable to generalize to unseen tasks and may have to learn an exponential number of policies in the length of the largest formula.



In this work, we consider a multitask setting in which a new task is sampled each episode from a large task space. Our motivation is to enable an agent to solve unseen tasks without further training, similar in spirit to previous works (e.g. \citealt{andreas2017modular, DBLP:conf/icra/XuNZGGFS18, DBLP:conf/icml/OhSLK17, DBLP:conf/nips/SohnOL18}), some of which also considers temporal logic tasks (\citealt{leon2020systematic, kuo2020encoding}). A common theme in all the previous listed works (except for one: \citealt{kuo2020encoding}) is to decompose large tasks into independent, sequential subtasks, however this often performs suboptimally in solving the full task. We instead consider the full LTL task, as also adopted by \citealt{kuo2020encoding}. We additionally propose to use LTL progression to enable standard, Markovian learning -- drastically improving both sample and wall-clock efficiency -- and pretraining the LTL module to accelerate learning. Note that \citealt{kuo2020encoding} do not encode LTL formulas, but instead compose neural networks to mirror the formula structure, which is incompatible with LTL pretraining as it is environment-dependent.



Other representations of task specifications for RL have also been proposed, including programs \cite{DBLP:conf/iclr/SunWL20, fasel2009task, DBLP:journals/corr/DenilCCSF17}, policy sketches \cite{andreas2017modular}, and natural language (\citealt{DBLP:conf/nips/JiangGMF19, DBLP:conf/iclr/BahdanauHLHHKG19}; see \citealt{DBLP:conf/ijcai/LuketinaNFFAGWR19} for an extensive survey). Finally, note that it is possible to automatically translate natural language instructions into LTL \citep[e.g.,][]{scheutz2009,brunello2019synthesis,wang2020learning}. 









\section{Conclusion}

Creating learning agents that understand and follow open-ended human instructions is a challenging problem with significant real-world applicability. Part of the difficulty stems from the need for large training sets of instructions and associated rewards. 
In this work, we trained an RL agent to follow temporally extended instructions specified in the formal language, LTL. 
The compositional syntax of LTL allowed us to generate massive training data, automatically.
We theoretically motivate a novel approach to multitask RL using neural encodings of LTL instructions and LTL progression. Our experiments on discrete and continuous domains demonstrated robust generalization across procedurally generated task sets, outperforming a prevalent myopic approach.

LTL is a popular specification language for 
synthesis and verification, and is 
used for robot tasking and human-robot interaction  
\citep[e.g.,][]{scheutz2009,DBLP:conf/iros/RamanFK12,li2016temporal,DBLP:conf/mrs/MoarrefK17,DBLP:journals/arobots/MoarrefK20,DBLP:conf/cdc/KasenbergS17,DBLP:conf/nips/ShahKSL18,DBLP:journals/ral/ShahLS20}. 
We believe the contributions in this paper have the potential for broad impact within these communities, and could be adapted for other structured languages.

\removehide{
\added{These results are encouraging and open several directions for future work. Some options include exploring ways to build and incorporate distribution-agnostic LTL models which fully capture LTL semantics.
Also, our approach assumes that the agent has access to a set of accurate event detectors (also known as a labelling function). This opens the question of whether we can learn such detectors and how the agent should behave if those detectors have some degree of noise. }
\remove{Finally, since the long-term aspiration of this line of research is to instruct RL agent, it makes sense to consider model-based RL approaches \citep[e.g.,][]{DBLP:conf/icml/CorneilGB18,DBLP:journals/corr/abs-2012-02419} and exploit existing approaches to transform natural language into LTL \citep[e.g.,][]{scheutz2009,wang2020learning}.}
}

\subsection*{Acknowledgements}

We gratefully acknowledge funding from the Natural Sciences and Engineering Research Council of Canada (NSERC), the Canada CIFAR AI Chairs Program, and Microsoft Research. The third author also gratefully acknowledges funding from ANID (Becas Chile). Resources used in preparing this research were provided, in part, by the Province of Ontario, the Government of Canada through CIFAR, and companies sponsoring the Vector Institute for Artificial Intelligence \url{www.vectorinstitute.ai/partners}. We
thank the Schwartz Reisman Institute for Technology and Society for
providing a rich multi-disciplinary research environment. Additionally, we thank our anonymous reviewers for their feedback which led to several improvements in this paper. 

\bibliography{main.bib}
\bibliographystyle{icml2021}

\
\newpage

\appendix
\twocolumn[
\icmltitle{LTL2Action: Generalizing LTL Instructions for Multi-Task RL (Appendix)}
]

\section{Proof of Theorem  3.1}
\label{app:theorem}

\paragraph{Theorem  3.1.}
\begin{itpars}
    Let $\mathcal{M}_\Phi = \tuple{S',T',A,\prob',R',\gamma,\mu'}$ be a Taskable MDP constructed from an MDP without a reward function $\mathcal{M}_e = \tuple{S,T,A,\prob,\gamma,\mu}$, a finite set of propositional symbols $\mathcal{P}$, a labelling function $L: S \times A \rightarrow 2^\mathcal{P}$, a finite set of LTL formulas $\Phi$, and a probability distribution $\tau$ over $\Phi$ according to Definition \ref{def:taskableMDP}. Then, an optimal stationary policy $\pi_\Phi^*(a|s,\varphi)$ for $\mathcal{M}_\Phi$ achieves the same expected discounted return as an optimal non-stationary policy $\pi_\varphi^*(a_t|s, a_1, ..., s_t,\varphi)$ for $\mathcal{M}_e$ w.r.t.\ $R_\varphi$ for all $s \in S$ and $\varphi \in \Phi$. 
    
\end{itpars}

To prove Theorem~\ref{theo:cross-product}, we use a theorem from \citet{DBLP:journals/ai/BacchusK00}, which shows the correctness of LTL progression:
\begin{theorem}
    \label{the:progre}
    Given any LTL formula $\varphi$ and an infinite sequence of truth assignments $\sigma=\langle \sigma_i, \sigma_{i+1}, \sigma_{i+2}, \ldots \rangle$ for the variables in $\mathcal{P}$, $\tuple{\sigma,i}\models \varphi$ iff $\tuple{\sigma,i+1}\models\mprog(\sigma_i,\varphi)$.
\end{theorem}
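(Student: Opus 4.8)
The plan is to prove the biconditional $\tuple{\sigma,i}\models\varphi \iff \tuple{\sigma,i+1}\models\mprog(\sigma_i,\varphi)$ by structural induction on $\varphi$, following the grammar $\varphi \Coloneqq p \mid \neg\varphi \mid \varphi\wedge\psi \mid \ltlnext\varphi \mid \varphi\ltluntil\psi$. Since progression can introduce the constants $\true$ and $\false$, I would first fix the convention that $\tuple{\sigma,j}\models\true$ and $\tuple{\sigma,j}\not\models\false$ for every $j$ and every $\sigma$; the derived operator $\vee$ appearing in the $\ltluntil$ clause of Definition~\ref{def:progression} is read through its usual expansion from $\neg$ and $\wedge$. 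I would fix an arbitrary position $i$ and arbitrary $\sigma$, and take the inductive hypothesis to be that the claim already holds at this same $i$ for every proper subformula of $\varphi$.

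For the base case $\varphi = p$, I split on whether $p\in\sigma_i$: if so, both $\tuple{\sigma,i}\models p$ and $\tuple{\sigma,i+1}\models\mprog(\sigma_i,p)=\true$ hold; if not, both fail, matching $\mprog(\sigma_i,p)=\false$. The cases $\neg\psi$ and $\psi_1\wedge\psi_2$ follow by pushing the connective through the semantics and applying the inductive hypothesis to the immediate subformula(s), using that $\mprog$ commutes with $\neg$ and $\wedge$ by Definition~\ref{def:progression}. The case $\ltlnext\psi$ is immediate and does not even invoke the hypothesis: by the semantics of $\ltlnext$, $\tuple{\sigma,i}\models\ltlnext\psi$ iff $\tuple{\sigma,i+1}\models\psi$, which is exactly $\tuple{\sigma,i+1}\models\mprog(\sigma_i,\ltlnext\psi)$ since $\mprog(\sigma_i,\ltlnext\psi)=\psi$.

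The hard part will be the $\ltluntil$ case, $\varphi=\psi_1\ltluntil\psi_2$. Here I would first establish the standard one-step unfolding (fixpoint) law for until: $\tuple{\sigma,i}\models\psi_1\ltluntil\psi_2$ holds iff either $\tuple{\sigma,i}\models\psi_2$, or both $\tuple{\sigma,i}\models\psi_1$ and $\tuple{\sigma,i+1}\models\psi_1\ltluntil\psi_2$. This follows from a case split on the witness index $j\ge i$ supplied by the semantics of until: $j=i$ yields the first disjunct, while $j>i$ forces $\tuple{\sigma,i}\models\psi_1$ and lets the same $j$ witness the shifted formula at $i+1$; the converse reassembles a witness by prepending position $i$. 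With the unfolding in hand, I apply the inductive hypothesis to $\psi_1$ and $\psi_2$ (at position $i$) to rewrite $\tuple{\sigma,i}\models\psi_2$ as $\tuple{\sigma,i+1}\models\mprog(\sigma_i,\psi_2)$ and $\tuple{\sigma,i}\models\psi_1$ as $\tuple{\sigma,i+1}\models\mprog(\sigma_i,\psi_1)$, while the remembered conjunct $\psi_1\ltluntil\psi_2$ is carried verbatim to position $i+1$ (no hypothesis is needed for it, since progression leaves it unprogressed). Collecting the three evaluations under a single assignment at $i+1$ gives precisely $\tuple{\sigma,i+1}\models\mprog(\sigma_i,\psi_2)\vee(\mprog(\sigma_i,\psi_1)\wedge\psi_1\ltluntil\psi_2)$, which equals $\tuple{\sigma,i+1}\models\mprog(\sigma_i,\psi_1\ltluntil\psi_2)$ by Definition~\ref{def:progression}. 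The subtlety to get right is the bookkeeping in this unfolding step: ensuring the witness index shifts correctly between positions $i$ and $i+1$, and that the untouched $\ltluntil$ subformula is discharged by the unfolding law rather than by an (unavailable) structural hypothesis.
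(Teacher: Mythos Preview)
Your structural-induction argument is correct and is exactly the standard proof of this result. Note, however, that the paper does not supply its own proof of this theorem: it is quoted in the appendix as a known result from \citet{DBLP:journals/ai/BacchusK00} and used as a black box in the proof of Theorem~\ref{theo:cross-product}. Your write-up therefore goes beyond what the paper does; the induction you give (with the fixpoint unfolding of $\ltluntil$ as the crux) is precisely how the cited source establishes the claim.
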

\begin{proof}
Using induction and Theorem~\ref{the:progre}, we can prove that the reward given by $R_\varphi$ and $r'$ is identical (at every time step) for any LTL formula $\varphi \in \Phi$, initial state $s_1 \in S$, and trace $s_1, a_1, ..., s_t,a_t$. Now, let's consider any state $s \in S$ and task $\varphi \in \Phi$. Given any optimal policy $\pi_\Phi^*(a|s,\varphi)$ for $\mathcal{M}_\Phi$, we can construct a policy $\pi_\varphi(a_t|s, a_1, ..., s_t,\varphi)$ for $\mathcal{M}_e$ that mimics the actions selection of $\pi_\Phi^*(a|s,\varphi)$ step by step. Hence, as the probability of reaching state $s'$ given state $s$ and action $a$ is the same for $\mathcal{M}_\Phi$ and $\mathcal{M}_e$, both policies will induce the same probability distribution over traces and, as the reward functions are equivalent, both policies $\pi_\Phi^*$ and $\pi_\varphi$ achieve the same expected discounted return. Finally, if we now have an optimal policy $\pi_\varphi^*(a_t|s, a_1, ..., s_t,\varphi)$ for $\mathcal{M}_e$, we can construct a non-stationary policy $\pi_\Phi(a_t|\tuple{s,\varphi}, a_1, ..., \tuple{s_t,\varphi_t})$ for $\mathcal{M}_\Phi$ that mimics the actions selection of $\pi_\varphi^*$ step by step. Following the same argument as before, we can see that $\pi_\varphi^*$ and $\pi_\Phi$ achieve the same expected discounted return. Since $\mathcal{M}_\Phi$ is an MDP, we know that there exist a stationary policy $\pi_\Phi'(a|s,\varphi)$ that achieves at least as much return as any non-stationary policy $\pi_\Phi(a_t|\tuple{s,\varphi}, a_1, ..., \tuple{s_t,\varphi_t})$. Therefore, we showed that optimal policies for $\mathcal{M}_\Phi$ are as good as optimal policies for $\mathcal{M}_e$ w.r.t.\ any $R_\varphi$ (and vice versa).
\end{proof}

\section{Experimental Details}

\begin{table*}[t]
\small
\centering
\caption{
PPO hyperparameters for \letterWorld. The same set of hyperparameters were used for both Avoidance and Partially-Ordered tasks.}
\label{table:ppo_hyper_letter_world}

\begin{tabular}{*{9}c}

\cmidrule[\heavyrulewidth]{1-9}
& \multicolumn{1}{c}{\gnnpp} &
  \multicolumn{1}{c}{\grupp} &
  \multicolumn{1}{c}{\gnnp} &
  \multicolumn{1}{c}{\grup} &
  \multicolumn{1}{c}{\lstmp} &
  \multicolumn{1}{c}{Myopic} &
  \multicolumn{1}{c}{GRU} &
  \multicolumn{1}{c}{No LTL}
   \\ 

\cmidrule[\heavyrulewidth]{1-9}

\multicolumn{1}{l}{Env. steps per update} & \multicolumn{8}{c}{$\xleftarrow{\hspace*{0.5\columnwidth}} 2,048 \xrightarrow{\hspace*{0.5\columnwidth}}$}  \\ 
\multicolumn{1}{l}{Number of epochs} & 4 & 8 & 4 & 8 & 8 & 8 & 4 & 4  \\ 
\multicolumn{1}{l}{Minibatch Size} & 256 & 256 & 256 & 256 & 256 & 256 & 1,024 & 1,024 \\ 
\multicolumn{1}{l}{Discount factor ($\gamma$)} & \multicolumn{8}{c}{$\xleftarrow{\hspace*{0.5\columnwidth}} 0.94 \xrightarrow{\hspace*{0.5\columnwidth}}$}   \\ 
\multicolumn{1}{l}{Learning rate} & $3\times10^{-4}$ & $3\times10^{-4}$ & $3\times10^{-4}$ & $3\times10^{-4}$ & $3\times10^{-4}$ & $10^{-4}$ & $3\times10^{-4}$ & $3\times10^{-4}$ \\
\multicolumn{1}{l}{GAE-$\lambda$} & \multicolumn{8}{c}{$\xleftarrow{\hspace*{0.5\columnwidth}} 0.95 \xrightarrow{\hspace*{0.5\columnwidth}}$} \\
\multicolumn{1}{l}{Entropy coefficient} & \multicolumn{8}{c}{$\xleftarrow{\hspace*{0.5\columnwidth}} 0.01 \xrightarrow{\hspace*{0.5\columnwidth}}$} \\
\multicolumn{1}{l}{Value loss coefficient} & \multicolumn{8}{c}{$\xleftarrow{\hspace*{0.5\columnwidth}} 0.5 \xrightarrow{\hspace*{0.5\columnwidth}}$} \\
\multicolumn{1}{l}{Gradient Clipping} & \multicolumn{8}{c}{$\xleftarrow{\hspace*{0.5\columnwidth}} 0.5 \xrightarrow{\hspace*{0.5\columnwidth}}$} \\
\multicolumn{1}{l}{PPO Clipping ($\varepsilon$)} & \multicolumn{8}{c}{$\xleftarrow{\hspace*{0.5\columnwidth}} 0.2 \xrightarrow{\hspace*{0.5\columnwidth}}$} \\

\bottomrule
\end{tabular}
\vspace{-2mm}
\end{table*}

In this section we provide some details on the task generation process as well as the hyperparameters used for our model training.
\subsection{LTL Task Generation}
\label{app:ltl_tasks}
Recall that we consider two task spaces: Partially-Ordered Tasks and Avoidance Tasks. The random generation of tasks is best described recursively with production rules of a context-free grammar. 
\\

\begin{tightcenter}
{\textbf{\underline{Partially-Ordered Tasks}}}
\vspace{-2mm}
\end{tightcenter}
\begin{align*}
\mathbf{formula} &\rightarrow \mathbf{sequence} \wedge \mathbf{formula} \mid \mathbf{sequence} \\
\mathbf{sequence} &\rightarrow {\Diamond}(\mathbf{term} \wedge \mathbf{sequence}) \mid \Diamond \mathbf{term} \\
\mathbf{term} &\rightarrow \mathsf{prop} \mid \mathsf{prop} \vee \mathsf{prop}
\end{align*}

In the above description, $\Diamond, \wedge, \vee$ are the \emph{eventually}, \emph{and}, \emph{or} LTL operators, respectively and $\mathsf{prop}$ refers to any propositional variable. 

Intuitively, Partially-Ordered Tasks presents $k$ sequences of propositions which can be solved simultaneously. A trace is successful if and only if for every one of the $k$ sequences, all the propositions in that sequence occur at some point in the trace (in the order of the sequence).
Note that Partially-Ordered Tasks can never be falsified. However, most tasks are computationally intractable to solve in as few steps as possible due to the exponential number of possible solutions which must be considered. An example formula that is a conjunction of 2 sequences, each of depth 2 is: 
$$\Diamond ((A \vee B) \wedge \Diamond C) \wedge \Diamond (C \wedge \Diamond D))$$

In our Letter World experiments, the number of conjuncts was randomly sampled between 1 and 4, and the depth of each sequence was randomly sampled between 1 and 5. Each ``term" had a 0.25 probability of being a disjunction of two propositions, and a 0.75 probability of being a single proposition.

To evaluate generalization to larger formulas, we considered (separately) increasing the depth of sequences and increasing the number of conjuncts. For increased depth tasks, the depth was 15 and the number of conjuncts was randomly sampled between 2 and 4. For increased number of conjuncts, the depth was randomly sampled between 3 and 5, and the number of conjuncts was 12. 

\begin{tightcenter}
\textbf{\underline{Avoidance Tasks}}
\vspace{-2mm}
\end{tightcenter}
\begin{align*}
\mathbf{formula} &\rightarrow \mathbf{sequence} \wedge \mathbf{formula} \mid \mathbf{sequence} \\
\mathbf{sequence} &\rightarrow \neg \mathsf{prop} \ltluntil (\mathsf{prop} \wedge \mathbf{sequence}) \mid \neg \mathsf{prop} \ltluntil \mathsf{prop}
\end{align*}

Here, the $\neg, \ltluntil$ symbols are the \emph{not}, \emph{until} LTL operators, respectively.
Similar to Partially-Ordered Tasks, several parallel sequences of propositions must be satisfied. However, this task space introduces the added challenge of propositions which must be avoided. The propositions to be avoided change as different parts of the task are solved.
An example formula that is a conjunction of two sequences, each of depth two is: 
$$(\neg A \ltluntil (K \wedge (\neg H \ltluntil J))) \wedge (\neg G \ltluntil (L \wedge (\neg F \ltluntil I)))$$

In order to guarantee that every formula can be solved, we do not allow the same proposition to appear twice in the same formula (avoiding conflicts such as $(\neg A \ltluntil A)$, which cannot be satisfied). In the Letter World, the number of conjuncts was randomly sampled between 1 and 2 and the depth of each sequence was randomly sampled between 1 and 3. For generalization to larger formulas, we considered depth 6 formulas with 1 conjunct (increased depth), as well as depth 2 formulas with 3 conjuncts (increased conjuncts). For the safety gym environment, we considered 1 conjunct, and randomly sampled the depth between 1 and 2 (longer tasks suffered from sparse reward, which is not the focus on this work). 

\subsection{Network Architectures}
\label{app:networks}

As mentioned in Section \ref{sec:arch}, we used PPO\footnote{We used \texttt{torch-ac}'s implementation of PPO (\url{https://github.com/lcswillems/torch-ac}).} as the RL method for our experiments. We used the same actor (3 fully-connected layers with [64, 64, 64] units and ReLU activations) and critic (3 fully-connected layers with [64, 64, 1] units and Tanh activation) model for \letterWorld and \zoneEnv. In \simLtl (pretraining), we used a single layer actor and critic with no hidden layers. This was to encourage the LTL module to learn a self-sufficient encoding (as the actor and critics cannot be transferred to downstream tasks).
For discrete action space environments, the actor's output was passed through a logit layer before softmax. For the continuous case we assumed a Gaussian action distribution and parameterized its mean and standard deviation by sending the actor's output to two separate linear layers. 

The Env Module is determined by the observation space of the underlying environment: in \letterWorld we used a 3-layer convolutional neural network with 16, 32 and 64 channels, kernel size of $2\times2$ and stride of 1 and in \zoneEnv we used a 2-layer fully-connected network with [128, 128] units and ReLU activations. Naturally, there is no Env Module for \simLtl.

For LTL Module, we tested the following architectures with roughly the same number of parameters ($\sim10^4$) to encode LTL formulas:
\begin{itemize}[topsep=0pt,itemsep=2pt,partopsep=0pt, parsep=0pt]

    \item \emph{Graph Neural Networks (GNN):} The R-GCN architecture of Section~\ref{sec:arch} with $T=8$ message passing steps and 32-dimensional node embeddings, i.e., $\boldsymbol{x}^{(t)}_v \in \mathbb{R}^{32}$. To reduce the number of trainable parameters we share the weight matrix across iterations for each edge type: $W_r = W_r^{(t)}$ ($0\le t \le T)$. We observed better expressibility by concatenating the embedding of a node at iteration $t$ with its one-hot encoding before passing it to the neighboring nodes for aggregation: $(\boldsymbol{x}^{(t)}_u, \boldsymbol{x}^{(0)}_u)$, thus $W_r \in \mathbb{R}^{(32+32)\times32}$. We used Tanh as the element-wise activation of Equation \ref{eq:rgcn}. 
    
    Note that the embedding does not consider information from nodes more than $T$ edges away from the root. However, this did not appear to be an issue in our experiments using $T = 8$, despite encountering formulas with ASTs larger than $8$. One mitigating factor is that LTL progression generally reduces the size of formulas as parts of the task are solved, and the information most immediately relevant to the task tends to lie closer to the root.
    \item \emph{Gated Recurrent Units (GRU):} A 2-layer bidirectional GRU with a 16-dimensional hidden layer. 
    \item \emph{Long Short-Term Memory (LSTM):} A 2-layer bidirectional LSTM with a 16-dimensional hidden layer. 
\end{itemize}

\subsection{PPO Hyperparameters}
\label{app:hyperparams}

All experiments were conducted on a compute cluster using 1 GPU and 16 CPU cores per run. The hyperparameters used for PPO for each baseline are displayed in Table~\ref{table:ppo_hyper_letter_world} for the \letterWorld, Table~\ref{table:ppo_hyper_safety} for the \zoneEnv, and Table~\ref{table:ppo_hyper_bootcamp} for the \simLtl (pretraining). Using a GNN architecture, training completed in approximately 26 hours in \letterWorld and 24 hours in the \zoneEnv (both for 20 million frames). Using a GRU to encode formulas was usually 2-3$\times$ more wall-clock efficient compared to GNN.

Note that all baselines which treat the problem as partially observable use an additional recurrent layer after the Env Model (i.e., \emph{GRU} and \emph{No LTL}). As backpropagation through all timesteps is computationally expensive, we backpropagate gradients only through the last 4 timesteps. We did not observe better performance by increasing the number of backpropagation steps.

\begin{table}[t]
\small
\centering
\caption{PPO hyperparameters for \zoneEnv. Only Avoidance tasks were considered on this environment.}
\label{table:ppo_hyper_safety}
\vspace{2mm}
\begin{tabular}{*{4}c}

\cmidrule[\heavyrulewidth]{1-4}
& \multicolumn{1}{c}{\gnnpp} &
  \multicolumn{1}{c}{\gnnp} &
  \multicolumn{1}{c}{Myopic}
   \\ 

\cmidrule[\heavyrulewidth]{1-4}

\multicolumn{1}{l}{Env. steps per update} & 65,536 & 65,536 & 65,536  \\ 
\multicolumn{1}{l}{Number of epochs} & 10 & 10 & 10 \\ 
\multicolumn{1}{l}{Minibatch size} & 2,048 & 2,048 & 1,024 \\ 
\multicolumn{1}{l}{Discount factor ($\gamma$)} &  0.998 &  0.998 &  0.998   \\ 
\multicolumn{1}{l}{Learning rate} & $3\times10^{-4}$ & $3\times10^{-4}$ & $3\times10^{-4}$ \\
\multicolumn{1}{l}{GAE-$\lambda$} & 0.95 & 0.95 & 0.95 \\
\multicolumn{1}{l}{Entropy coefficient} & 0.003 & 0.003 & 0.003 \\
\multicolumn{1}{l}{Value loss coefficient} & 0.5 & 0.5 & 0.5 \\
\multicolumn{1}{l}{Gradient Clipping} & 0.5 & 0.5 & 0.5  \\
\multicolumn{1}{l}{PPO Clipping ($\varepsilon$)} & 0.2 & 0.2 & 0.2 \\

\bottomrule
\end{tabular}
\vspace{-3mm}
\end{table}

\begin{table}[t]
\small
\centering
\caption{PPO hyperparameters for \simLtl (pretraining).}
\label{table:ppo_hyper_bootcamp}
\vspace{2mm}
\begin{tabular}{*{3}c}

\cmidrule[\heavyrulewidth]{1-3}
& \multicolumn{1}{c}{\gnnp} &
  \multicolumn{1}{c}{\grup}
   \\ 

\cmidrule[\heavyrulewidth]{1-3}
   
\multicolumn{3}{c}{(a) Avoidance Tasks} \\ \cmidrule(lr){1-3} 

\multicolumn{1}{l}{Env. steps per update} & 8,192 & 8,192  \\ 
\multicolumn{1}{l}{Number of epochs} & 2 & 2 \\ 
\multicolumn{1}{l}{Minibatch size} & 1,024 & 1,024 \\ 
\multicolumn{1}{l}{Discount factor ($\gamma$)} & 0.9 & 0.9  \\ 
\multicolumn{1}{l}{Learning rate} & $10^{-3}$ & $10^{-3}$ \\
\multicolumn{1}{l}{GAE-$\lambda$} & 0.5 & 0.5 \\
\multicolumn{1}{l}{Entropy coefficient} & 0.01 & 0.01 \\
\multicolumn{1}{l}{Value loss coefficient} & 0.5 & 0.5 \\
\multicolumn{1}{l}{Gradient Clipping} & 0.5 & 0.5 \\
\multicolumn{1}{l}{PPO Clipping ($\varepsilon$)} & 0.1 & 0.1 \\

\cmidrule(lr){1-3} 
\multicolumn{3}{c}{(b) Partially-Ordered Tasks} \\ \cmidrule(lr){1-3} 

\multicolumn{1}{l}{Env. steps per update} & 8,192 & 8,192  \\ 
\multicolumn{1}{l}{Number of epochs} & 2 & 4 \\ 
\multicolumn{1}{l}{Minibatch size} & 1,024 & 1,024\\ 
\multicolumn{1}{l}{Discount factor ($\gamma$)} & 0.9 & 0.9  \\ 
\multicolumn{1}{l}{Learning rate} & $10^{-3}$ & $3\times10^{-3}$ \\
\multicolumn{1}{l}{GAE-$\lambda$} & 0.5 & 0.95 \\
\multicolumn{1}{l}{Entropy coefficient} & 0.01 & 0.01 \\
\multicolumn{1}{l}{Value loss coefficient} & 0.5 & 0.5 \\
\multicolumn{1}{l}{Gradient Clipping} & 0.5 & 0.5 \\
\multicolumn{1}{l}{PPO Clipping ($\varepsilon$)} & 0.1 & 0.2 \\

\bottomrule
\end{tabular}
\end{table}

\section{Additional Results}

\subsection{Generalization to Unseen Objects}
\label{sec:object-gen}
While the main focus of our work was generalization to new instructions, an important related problem is generalization to unseen \emph{objects} \cite{hill2021grounded, leon2020systematic}. We conduct a simple experiment in \letterWorld to test object generalization in our framework by evaluating on unseen letters/propositions. While our framework normally uses one-hot encodings for propositions, such an approach is not conducive to generalization to new propositions. Here, we instead encode each letter as a random (but fixed), normalized vector in a low-dimensional space $\mathbb{R}^d$ (we use $d=3$). Importantly, the same proposition is encoded in the same way in both the grid and in LTL formulas. We consider Avoidance tasks with a depth of 2 and train our agent on formulas over 12 fixed letters. We then evaluate this agent (over 5 seeds and 1000 episodes per seed) on formulas with the same structure, but over (a) 6 previously seen and 6 unseen letters, and
(b) 12 unseen letters. 

Results are displayed in Table~\ref{table:obj_generalization}. Compared to a random action-selection baseline, our framework generalizes well to new tasks over unseen letters.
\begin{table}[t]
\small
\centering
\caption{
RL agents are trained on LTL tasks over 12 letters, and are evaluated on tasks over some unseen letters. In each entry, we report the mean return over 5 seeds and 1000 episodes per seed, with 90\% confidence error.}
\label{table:obj_generalization}
\vspace{2mm}

\begin{tabular}{lcc}
\cmidrule[\heavyrulewidth]{2-3}
\multicolumn{1}{c}{} & \multicolumn{2}{c}{\% Unseen Letters} \\
\cmidrule[\heavyrulewidth]{2-3} 
\multicolumn{1}{c}{} & 50\%                 & 100\%                \\
\cmidrule{2-3} 
Ours                  & $0.667 \pm 0.015$       & $0.607 \pm 0.016$       \\
Random                & $-0.374 \pm 0.021$      & $-0.374 \pm 0.021$  \\    
\cmidrule[\heavyrulewidth]{1-3}
\end{tabular}
\end{table}

\subsection{Pretraining Learning Curves}

\begin{figure*}[t]
    \centering
    \begin{tikzpicture}
    \small
    
    \node at (2.5,0) {\includegraphics[width=0.95\columnwidth]{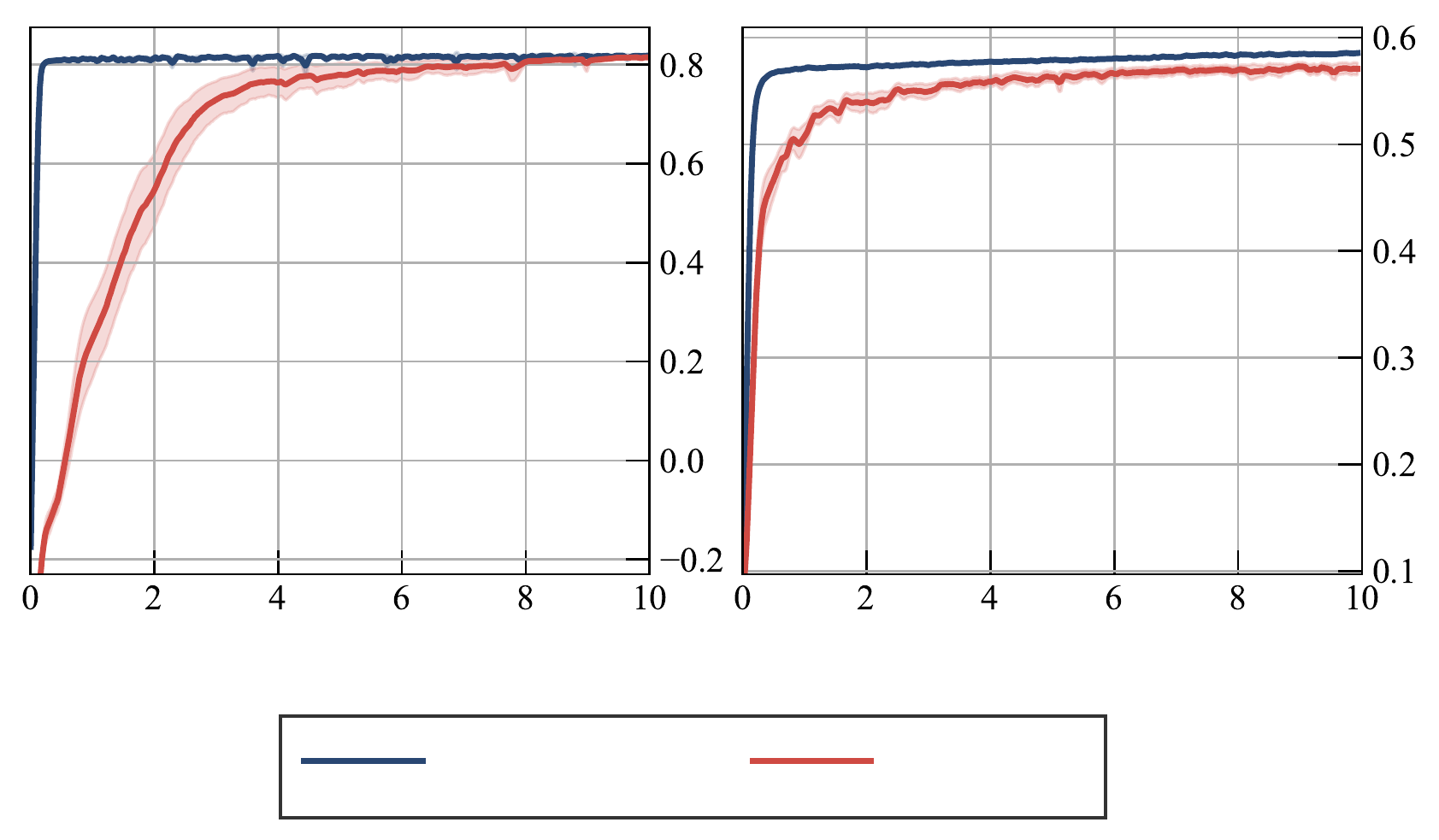}};
    \draw (0.4,2.4) node {Avoidance Tasks}; 
    \draw (4.35,2.4) node {Partially-Ordered Tasks}; 
    \draw (-1.5,0.7) node[rotate=90] {Discounted return};
    \draw (2.3,-1.3) node {Frames (millions)}; 
    
    \draw (1.55,-1.89) node {\gnnp};
    \draw (3.95,-1.89) node {\grup}; 
    
    \end{tikzpicture}
  \caption{The learning curves of the GNN and GRU (both with progression) in the \simLtl (pretraining) environment. Given random formulas, the task is to choose propositions which satisfy it in as few steps as possible.
  We report \emph{discounted return} over the duration of training (averaged over 30 seeds, with 90\% confidence intervals).
  } 
  \label{fig:pretraining_curves}
  \vspace{150mm}
\end{figure*}

In Figure~\ref{fig:pretraining_curves}, we report the learning curves of {\gnnp} and {\grup} on the \simLtl environment. Note that this is not meant to be an evaluation benchmark and is only used for pretraining the LTL module in our other experiments (see the main text, Figure~\ref{fig:transfer_graph}). We observe, however, that the GNN is able to learn significantly faster than the GRU. Note that the Partially-Ordered tasks still remain extremely challenging to solve optimally, even in this abstracted environment.

    
    
    
    



\end{document}